\definecolor{DarkGreen}{rgb}{0.1,0.5,0.1}
\definecolor{DarkRed}{rgb}{0.5,0.1,0.1}
\definecolor{DarkBlue}{rgb}{0.1,0.1,0.5}
\definecolor{Gray}{rgb}{0.2,0.2,0.2}
\lstdefinestyle{mystyle}{
    commentstyle=\color{DarkBlue},
    keywordstyle=\color{DarkRed},
    numberstyle=\tiny\color{Gray},
    stringstyle=\color{DarkGreen},
    basicstyle=\footnotesize,
    breakatwhitespace=false,         
    breaklines=true,                 
    captionpos=b,                    
    keepspaces=true,                 
    numbers=left,                    
    numbersep=5pt,                  
    showspaces=false,                
    showstringspaces=false,
    showtabs=false,                  
    tabsize=2
}
\newcommand{\algorithmlabel}[1]{\label{alg:#1}}
\newcommand{\algorithmref}[1]{\hyperref[alg:#1]{Algorithm~\ref{alg:#1}}}
\newcommand{\chapterref}[1]{\hyperref[ch:#1]{Chapter~\ref{ch:#1}}}
\newcommand{\claimref}[1]{\hyperref[claim:#1]{Claim~\ref{claim:#1}}}
\newcommand{\corollaryref}[1]{\hyperref[cor:#1]{Corollary~\ref{cor:#1}}}
\newcommand{\definitionlabel}[1]{\label{def:#1}}
\newcommand{\definitionref}[1]{\hyperref[def:#1]{Definition~\ref{def:#1}}}
\newcommand{\equationlabel}[1]{\label{eq:#1}}
\newcommand{\equationref}[1]{\hyperref[eq:#1]{Equation~\ref{eq:#1}}}
\newcommand{\factref}[1]{\hyperref[fact:#1]{Fact~\ref{fact:#1}}}
\newcommand{\figurelabel}[1]{\label{fig:#1}}
\newcommand{\figureref}[1]{\hyperref[fig:#1]{Figure~\ref{fig:#1}}}
\newcommand{\tablelabel}[1]{\label{tab:#1}}
\newcommand{\tableref}[1]{\hyperref[tab:#1]{Table~\ref{tab:#1}}}
\newcommand{\itemref}[1]{\hyperref[item:#1]{Item~(\ref{item:#1})}}
\newcommand{\lemmalabel}[1]{\label{lem:#1}}
\newcommand{\lemmaref}[1]{\hyperref[lem:#1]{Lemma~\ref{lem:#1}}}
\newcommand{\propref}[1]{\hyperref[prop:#1]{Proposition~\ref{prop:#1}}}
\newcommand{\propositionlabel}[1]{\label{prop:#1}}
\newcommand{\propositionref}[1]{\hyperref[prop:#1]{Proposition~\ref{prop:#1}}}
\newcommand{\remarkref}[1]{\hyperref[rem:#1]{Remark~\ref{rem:#1}}}
\newcommand{\sectionref}[1]{\hyperref[sec:#1]{Section~\ref{sec:#1}}}
\newcommand{\appendixref}[1]{\hyperref[sec:#1]{Appendix~\ref{sec:#1}}}
\newcommand{\theoremlabel}[1]{\label{thm:#1}}
\newcommand{\theoremref}[1]{\hyperref[thm:#1]{Theorem~\ref{thm:#1}}}
\newcommand{\exampleref}[1]{\hyperref[claim:#1]{Example~\ref{claim:#1}}}
\newcommand{\setupref}[1]{\hyperref[claim:#1]{Setup~\ref{claim:#1}}}
\newtheorem{theorem}{Theorem}
\newtheorem{lemma}[theorem]{Lemma}
\newtheorem{proposition}[theorem]{Proposition}
\theoremstyle{definition}
\newtheorem{definition}{Definition}
\newtheorem*{property*}{Property}
\newenvironment{itm}
{\begin{itemize}[noitemsep,topsep=0pt,parsep=0pt,partopsep=0pt]}
{\end{itemize}}
\newenvironment{enum}
{\begin{enumerate}[noitemsep,topsep=0pt,parsep=0pt,partopsep=0pt]}
{\end{enumerate}}
\title{Algorithmic Decision Making with Conditional Fairness} 
\author[1]{Renzhe Xu}
\author[1]{Peng Cui}
\author[2]{Kun Kuang}
\author[1]{Bo Li}
\author[1]{Linjun Zhou}
\author[1]{Zheyan Shen}
\author[3]{Wei Cui}
\affil[ ]{{\small xrz199721@gmail.com, cuip@tsinghua.edu.cn, kunkuang@zju.edu.cn, libo@sem.tsinghua.edu.cn,}}
\affil[ ]{{\small \{zhoulj16, shenzy17\}@mails.tsinghua.edu.cn, cuiwei@songshuai.com}}
\affil[ ]{}
\affil[1]{Tsinghua University}
\affil[2]{Zhejiang University}
\affil[3]{Squirrel AI Learning}
\date{}
\begin{document}
\maketitle

\begin{abstract}
Nowadays fairness issues have raised great concerns in decision-making systems. Various fairness notions have been proposed to measure the degree to which an algorithm is unfair. In practice, there frequently exist a certain set of variables we term as fair variables, which are pre-decision covariates such as users' choices. The effects of fair variables are irrelevant in assessing the fairness of the decision support algorithm. We thus define conditional fairness as a more sound fairness metric by conditioning on the fairness variables. Given different prior knowledge of fair variables, we demonstrate that traditional fairness notations, such as demographic parity and equalized odds, are special cases of our conditional fairness notations. Moreover, we propose a Derivable Conditional Fairness Regularizer (DCFR)\footnote{The code is available at https://github.com/windxrz/DCFR.}, which can be integrated into any decision-making model, to track the trade-off between precision and fairness of algorithmic decision making. Specifically, an adversarial representation based conditional independence loss is proposed in our DCFR to measure the degree of unfairness. With extensive experiments on three real-world datasets, we demonstrate the advantages of our conditional fairness notation and DCFR.
\end{abstract}

\section{Introduction}
Nowadays fairness issues have raised great concerns in decision-making systems such as loan applications \cite[]{mukerjee2002multi}, hiring processes \cite[]{rivera2012hiring}, and criminal justice \cite[]{larson2016we}. Poorly designed algorithms tend to amplify the bias existed in data, resulting in discriminations towards specific groups of individuals based on their inherent characteristics, which are often named as sensitive attributes in fairness problems. For example, race is a sensitive attribute for crime judgment. ProPublica \cite[]{larson2016we} found it is unfair that African Americans were more likely to be incorrectly labeled as higher risk compared with Caucasians in the COMPAS system. However, what is fair and how to develop fair algorithms for algorithmic decision making are of paramount importance for both academic research and practical applications.

Recently, many works defined their fairness and proposed corresponding fair algorithms, from which, the definition of fairness can be divided into three types: individual fairness \cite[]{dwork2012fairness}, group fairness \cite[]{hardt2016equality,kleinberg2016inherent}, and causality-base fairness notions \cite[]{kusner2017counterfactual, kilbertus2017avoiding, chiappa2019path, nabi2018fair, wu2019pc, russell2017worlds}. Individual fairness requires similar individuals should have similar outcomes. However, it is difficult to define the similarity between individuals. Group fairness requires equity among different groups but they only use sensitive attributes and outcomes as measuring features. As a result, these notions may fail to distinguish between fair and unfair parts in the problem. For example, \citet{pearl2009causality} studied the case of Berkeley's alleged sex bias in graduate admission \cite[]{bickel1975sex} and found that data showed a higher rate of admission for male applicants overall but the result is different when looking into the department choice. The bias caused by department choice should be considered fair but traditional group fairness notions fail to judge fairness since they do not take the department choice into account.

Inspired by this, causality-based fairness notions arise. In these papers, the authors firstly assumed a causal graph between the features, and afterward, they could define the unfair causal effect from the sensitive attribute to the outcome as a metric. However, these fairness notions need very strong assumptions and they are not scalable.

In practice, there frequently exist a certain set of variables we term as fair variables, which are pre-decision covariates such as the department choice in Berkeley's graduate admission problem. The effects of fair variables are irrelevant in assessing the fairness of the decision support algorithm. We thus define conditional fairness as a more sound fairness metric by conditioning on the fairness variables. In detail, outcome variables should be independent of sensitive attributes conditional on these fair variables.

The definition of conditional fairness has several advantages. Firstly, the fair variables can be any variables determined by the decision-makers or decision-making system inspectors, which provides a more flexible judging method. Secondly, conditional fairness can be viewed as a more general fairness notion since it can be easily reduced to demographic parity and equalized odds. If we believe that none of the features are fair variables, the conditional independence constraint becomes a normal independence constraint. If we choose fair variables as the outcome, the constraint is transformed into the target in equalized odds. Thirdly, the definition does not need strong assumptions in causality-based fairness definitions, which makes it much easier to be applied to real problems with a large amount of data and features. 

The main challenge of this definition is to formulate the conditional independence constraint into a derivable loss function, which makes it impossible to be applied to commonly used gradient-based methods. Inspired by the conditional independence testing techniques in causality structure discovery literature \cite[]{zhang2012kernel, strobl2019approximate,daudin1980partial}, we formulate the conditional independence constraint into a computationally amenable form, which serves as a regularizer and is subsequently appended to the prediction loss function. We call the working regularizer as the Derivable Conditional Fairness Regularizer (DCFR).

As fair representation learning is a common in-processing framework to deal with fairness issues, we apply the regularizer and handle the conditional fairness constraint in this framework. With the regularizer, the target optimization problem can be cast as a minimax problem, which can be solved via adversarial learning. We further show that our method is also a general method that can be used in common group fairness notions (demographic parity and equalized odds). Most interestingly, when we believe none of the variables are fair, the target problem is reduced to demographic parity and our method also becomes the same as that proposed by \citet{madras2018learning} when dealing with the same problem.

Finally, we apply our method to real datasets and plot accuracy-fairness curves for three targets (demographic parity, equalized odds and, conditional fairness). We show that our method performs better on the conditional fairness task while having similar results with the state-of-the-art on the other two tasks.

In summary, our contributions are highlighted as follows:
\begin{itm}
    \item By exploring the fair variables, we propose conditional fairness, which is more general than previous definitions on fairness.
    \item We propose a novel Derivable Conditional Fairness Regularizer (DCFR) to optimize conditional fairness by learning conditional independent representation. Our DCFR can be easily integrated into decision-making models to improve fairness.
    \item We demonstrate the effectiveness of our proposed algorithm on fair decision making with three real-world datasets. Furthermore, DCFR performs especially better than baselines when increasing the number of values that fair variables can take.
\end{itm}

\section{Related Works}
There are several common types of fairness notions including individual fairness, group fairness, and causality-based fairness. The most commonly used individual fairness notion is fairness through awareness \cite[]{dwork2012fairness} which requires that similar individuals should be treated similarly. However, it is difficult to define the similarity function between different individuals. Therefore, individual fairness still lacks further research up to today. Group fairness notions require the algorithm should treat different groups of individuals equally. The most commonly used group fairness notions include demographic parity \cite[]{dwork2012fairness}, equal opportunity \cite[]{hardt2016equality}, equalized odds \cite[]{hardt2016equality} and calibration \cite[]{kleinberg2016inherent}. These fairness notions are easy to understand and implement in real machine learning problems. However, they only use sensitive attributes and outcomes as measuring features. As a result, these notions may fail to distinguish between fair and unfair parts in the problem. To define fairness more elaborately, causality-based fairness notions are proposed recently. In these causality-based fairness notions such as counterfactual fairness \cite[]{kusner2017counterfactual}, path-specific counterfactual fairness \cite[]{chiappa2019path, nabi2018fair, wu2019pc}, the authors first define a causal graph among the features, and afterward, they can distinguish the unfair causal effect from the sensitive attribute to the outcome. However, these fair notions need very strong assumptions and they are not scalable.

\citet{kamiran2013quantifying} proposed the most similar fairness notions as us. In \cite[]{kamiran2013quantifying}, the authors defined the variables as explanatory variables and proposed algorithms to mitigate the illegal discrimination they defined. However, their method is limited as it may do great harm to accuracy and it cannot be applied in practice as it cannot provide a tunable tradeoff between fairness and utility. Recently, \citet{dutta2020information} also proposed an information-theoretic decomposition of the total discrimination.

Methods that mitigate biases in the algorithms fall under three categories: pre-processing \cite[]{wang2019repairing, feldman2015certifying, kamiran2012data}, in-processing \cite[]{zafar2017fairness, hashimoto2018fairness}, and post-processing \cite[]{hardt2016equality} algorithms. Representation learning is a common in-processing method which is first proposed by \citet{zemel2013learning}. The authors try to mitigate individual unfairness and demographic discrimination simultaneously. Recently, learning representations via adversary has become the state-of-the-art method. \citet{edwards2015censoring} first proposed this kind of method and they provided a framework to mitigate demographic discrimination. Several works followed this framework such as  \cite[]{beutel2017data, zhang2018mitigating, adel2019one,madras2018learning, zhao2020conditional}. In particular, \citet{madras2018learning} proposed to use different adversarial loss function when faced with different fair notions. \citet{zhao2020conditional} redesigned the loss functions to mitigate the gap of demographic parity and equalized odds simultaneously, which is proved to be difficult in \cite[]{kleinberg2016inherent}. However, these works all focus on the most commonly used group fairness notions. Therefore they cannot be applied to the general conditional fairness target. \citet{agarwal2018reductions} proposed a general method to mitigate any fairness notions that can be written as linear inequalities on conditional moments. But they still require the categorical fair variables which makes it difficult to be extended to more general form.

Conditional independence tests have been popularly used in causal structure discovery problems \cite[]{spirtes2000causation}. In order to deal with more flexible distributions, several novel conditional independence tests have been proposed \cite[]{fukumizu2008kernel, ramsey2014scalable, sejdinovic2013equivalence, strobl2019approximate}. However, these methods cannot be mixed with gradient-based machine learning algorithms, since they usually calculate a statistic first and estimate a p-value with random methods. Our method is based on an equivalent relation of conditional independence \cite[]{daudin1980partial} and is tractable in common machine learning algorithms.
\section{Preliminary}
\subsection{Notations}
We suppose the dataset consists of a tuple $D = (S, X, Y)$, where $S$ represents sensitive attributes such as gender and race, $X$ represents features, and $Y$ represents the outcome. Furthermore, we divide features $X$ into two parts $X = (F, O)$, where $F$ represents fair variables and $O$ represents other features. We use $m_X, m_F, m_O$ to denote the dimension of the features and we have $m_X = m_F + m_O$. We use calligraphic fonts to represent the range of corresponding random variables. For example $\mathcal{X}$ represents the space of $X$ and $\mathcal{X} \subset \mathbb{R}^{m_X}$. Similarly, we have $\mathcal{F} \subset \mathbb{R}^{m_F}$. To simplify, we suppose the sensitive attribute and the outcome are binary, which means $\mathcal{Y}, \mathcal{S} = \{0, 1\}$. We set $S = 1$ as the privileged group and $Y=1$ as the favored outcome.

We suppose there are $N$ samples in total and we use $S_i$, $X_i$, $Y_i$, $F_i$, $O_i$ to represent the features of $i$-th sample. In addition, for a condition $E$, we use $D(E)$ to represent the samples that satisfy the condition and $|D(E)|$ to represent the number of these samples. For example, $D(Y=1)$ means the samples that satisfy $Y_i=1$ and $|D(Y=1)|$ is the total number of such samples.

A fair machine learning problem is to design a fair predictor $\hat{Y}$ with parameters $\boldsymbol{\theta}$ $: \mathcal{X} \times \mathcal{S} \rightarrow \mathcal{Y}$, which maximizes the likelihood $P(Y, X, S | \boldsymbol{\theta})$ while satisfying some specific fair constraints, which we will introduce in the next section.

\subsection{Fairness Notions}
We first introduce some well-known fair notions in machine learning problems.

\begin{definition}[Demographic parity (DP)]
    Given the joint distribution $D$, the classifier $\hat{Y}$ satisfies demographic parity with respect to sensitive attribute $S$ if $\hat{Y}$ is independent of $S$, i.e.\begin{equation}\equationlabel{DP}
        \hat{Y} \perp S.
    \end{equation}
\end{definition}
The definition of DP is clear and concise, representing that S has no predictive power to $\hat{Y}$, but in practice we are also interested in some evaluation metric to reveal how fair the system is. Thus the following equivalent form $\Delta DP$ is proposed to measure the degree of fairness.
\begin{equation}
    \Delta DP \overset{\Delta}{=} |P(\hat{Y} = 1 | S = 1) - P(\hat{Y} = 1 | S = 0)|.
\end{equation}
Easy to show that $\hat{Y} \perp S$ if and only if $\Delta DP = 0$.

One of the drawbacks of $\Delta DP$ is that when the base rate differs significantly among two groups, \textit{i.e.}, $P(Y=1 | S=0) \neq P(Y=1 | S=1)$, the utility could be limited. \citet{hardt2016equality} further proposed another notion Equalized Odds to avoid this problem.

\begin{definition}[Equalized odds (EO)]
    Given the joint distribution $D$, the classifier $\hat{Y}$ satisfies equalized odds with respect to sensitive attribute $S$ if $\hat{Y}$ is independent of $S$ conditional on $Y$, i.e.
    \begin{equation}\equationlabel{EO}
        \hat{Y} \perp S \mid Y.
    \end{equation}
\end{definition}
Similarly, the metric $\Delta EO$ is defined as the expectation of the absolute difference of true positive rate and false positive rate across two groups.
\begin{equation}
    \begin{aligned}
        \Delta EO & \overset{\Delta}{=} \mathbb{E}_y\left[\left|P(\hat{Y} = 1 | S = 1, Y = y) - P(\hat{Y} = 1 | S = 0, Y = y)\right|\right]\\
        & = P(Y=0) \left|P(\hat{Y} = 1 | S = 1, Y = 0) - P(\hat{Y} = 1 | S = 0, Y = 0)\right|\\
        & + P(Y=1) \left|P(\hat{Y} = 1 | S = 1, Y = 1) - P(\hat{Y} = 1 | S = 0, Y = 1)\right|.
    \end{aligned}
\end{equation}

It is also easy to show that $\hat{Y} \perp S \mid Y$ if and only if $\Delta EO = 0$.

None of the fairness notions above take fair variables into account, inspired by \citet{kamiran2013quantifying} and \citet{corbett2017algorithmic}, we denote conditional fairness as 
\begin{definition}[Conditional fairness (CF)]
    Given the joint distribution $D$, the classifier $\hat{Y}$ satisfies conditional fairness with respect to sensitive attribute $S$ and fair variables $F$ if $\hat{Y}$ is independent of $S$ conditional on $F$, i.e.
    \begin{equation}\equationlabel{CF}
        \hat{Y} \perp S \mid F.
    \end{equation}
\end{definition}

In addition, similar to $\Delta EO$, we define a metric $\Delta CF$ as:
\begin{equation}
\equationlabel{eqn:cfdef}
\Delta CF \overset{\Delta}{=} \mathbb{E}_f\left[\left|P(\hat{Y} = 1 | S = 1, F = f) - P(\hat{Y} = 1 | S = 0, F = f)\right|\right]
\end{equation}
Specifically, when fair variables are continuous, \equationref{eqn:cfdef} becomes:
\begin{equation}
    \Delta CF = \int_{f \in \mathcal{F}} \left|P(\hat{Y} = 1 | S = 1, F = f) - P(\hat{Y} = 1 | S = 0, F = f)\right| \mathrm{d} \mathbb{P}(f).
\end{equation}
and when fair variables are categorical, $\Delta CF$ becomes
\begin{equation}\equationlabel{CF_metric}
    \Delta CF = \sum_{f \in \mathcal{F}} \left|P(\hat{Y} = 1 | S = 1, F = f) - P(\hat{Y} = 1 | S = 0, F = f)\right|P(F=f).
\end{equation}
$\Delta CF$ aims to calculate the mean of the absolute difference between two groups among all potential values of the fair variables. Similarly, we have $\hat{Y} \perp S \mid F$ if and only if $\Delta CF = 0$.

\paragraph{Compare CF with DP and EO} On the one hand, conditional fairness can take more complex situations into account. On the other hand, conditional fairness is more general and it can be easily reduced to DP and EO.

\begin{figure}[ht]
    \centering
    \includegraphics[width=0.3\linewidth]{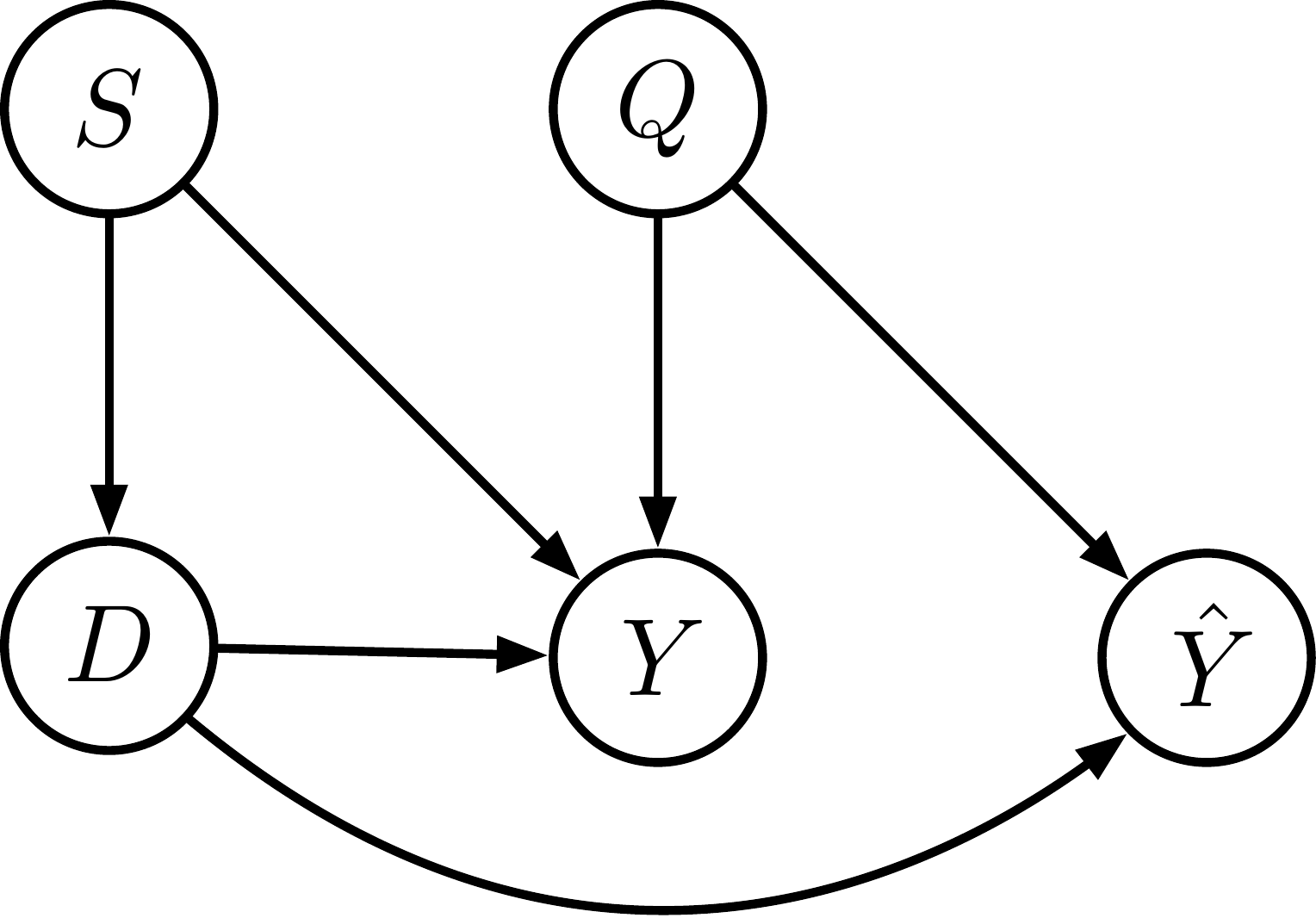}
    \caption{The data-generating graph for a toy example of a college admission case. $S$, $D$, $Q$, $Y$ represent gender, department choice, qualification, and historical admission decision, respectively. $\hat{Y}$ represents a conditional fair decision-making system.}
    \figurelabel{fig:UCB}
\end{figure}

Consider the data-generating graph for a toy example of a college admission case in \figureref{fig:UCB}. Because qualification requirements usually differ among various departments, it is fair to determine outcomes according to department choices and qualifications. Hence any predictors with form $\hat{Y} = f(Q, D)$ can be considered fair in practice. It is easy to show that when setting department choice $D$ as the fair variable, $\hat{Y}$ is conditional fair. However, DP and EO fail to judge the fairness of $\hat{Y}$ as \equationref{DP} and \equationref{EO} may not be satisfied.

In addition, conditional fairness is a more flexible fairness notion as:
\begin{itm}
    \item If we believe none of the features $X$ is fair, which means $F = \emptyset$, the conditional independence target is reduced to the independence condition as shown in \equationref{DP} and conditional fairness is reduced to DP.
    \item If we set $F$ as $Y$, the conditional independence target is reduced to the conditional independence as shown in \equationref{EO} and conditional fairness is reduced to EO.
\end{itm}

\paragraph{Compare CF with causality-based fairness notions} Generally speaking, conditional fairness requires much fewer assumptions than causality-based fairness notions, which makes CF practical in real problems.

Under some circumstances, a conditional fair decision-making system can satisfy causality-based fairness notions. Consider path-specific fairness \cite[]{chiappa2019path} in the example shown in \figureref{fig:UCB}. The directed path $S \rightarrow Y$ can be viewed as an unfair path while $S \rightarrow D \rightarrow Y$ and $Q \rightarrow Y$ are fair paths. Hence, the historical decisions $Y$ is not path-specific fair for the existence of unfair path $S \rightarrow Y$. However, the conditional fair decision-making system $\hat{Y} = f(Q, D)$ successfully satisfies the requirement as the unfair path $S \rightarrow \hat{Y}$ does not exist. As for deeper connections between conditional fairness and causality-based fairness notions, we remain as future works.

\subsection{Problem Formulation}
Next we will apply our definition of conditional fairness into real fair problems. In general, the goal of a fairness problem is to achieve a balance between fairness and algorithm performance. Formally, we need to design a loss function on prediction $L_\text{pred}(\hat{Y}, Y)$ and another loss function on fairness $L_\text{fair}(\hat{Y}, S, F)$. The optimization goal of a fairness problem can be formulated as:
\begin{equation}\equationlabel{eq:problem_formulation}
    \boldsymbol{\theta} = \underset{\boldsymbol{\theta}}{\arg\min}\, L(\hat{Y}) = \underset{\boldsymbol{\theta}}{\arg\min}\, L_\text{pred}(\hat{Y}, Y) + \lambda \cdot L_\text{fair}(\hat{Y}, S, F),
\end{equation}
where the hyper-parameter $\lambda$ provides a trade-off between fairness and performance. When $\lambda$ is large, the target tends to make $L_\text{fair}$ small which can ensure fairness while doing harm to performance, and the result is opposite when $\lambda$ is small.

As for the prediction loss, any form of traditional loss functions are suitable such as cross-entropy or L1 loss. While the fairness loss targeted for conditional fairness is difficult to design relatively. When fair variables are categorical, we can use the $\Delta CF$ metric as a loss function. However, in practice, the fair variables may contain many different values or they may be continuous. Under this circumstance, the metric can no longer be a suitable loss function for optimization. Inspired by this issue, we will propose a new derivable loss function that can deal with these situations in the next section.

\section{Proposed Method}
\begin{figure}[ht]
    \centering
    \includegraphics[width=0.5\linewidth]{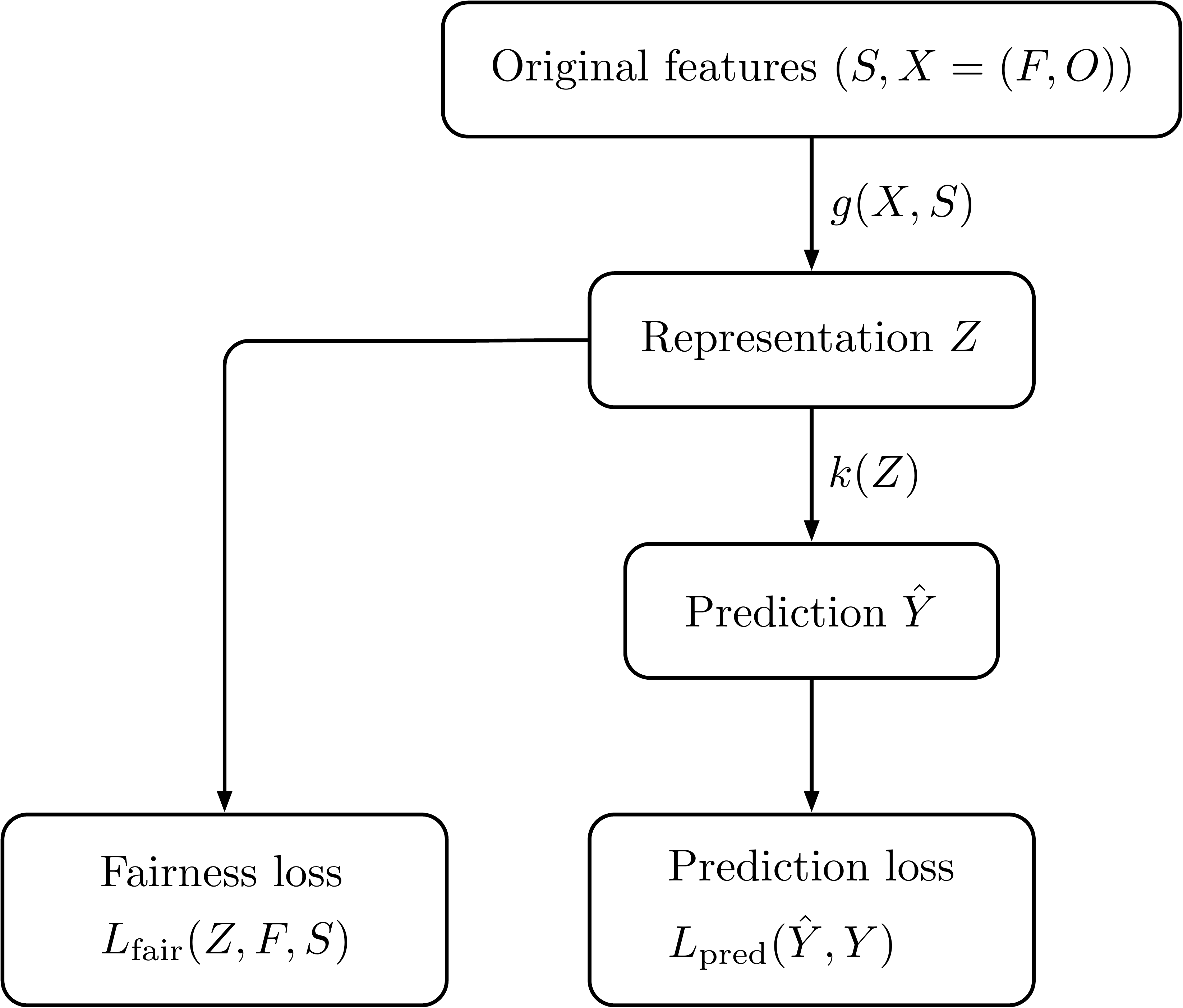}
    \caption{The framework of our method. The variables include sensitive attribute $S$, features $X$, outcome $Y$, representation $Z$, and prediction $\hat{Y}$. $X$ is divided into fair variables $F$ and other variables $O$. The function $g$ maps the original features into the representation space, and function $k$ maps the representations into the outcome space. There are two loss functions that measure utility and fairness respectively.}
    \figurelabel{fig:framework}
\end{figure}

An intuitive way to deal with this conditional independence is to divide the whole training samples into different groups with respect to the value of fair variables and then deal with these groups separately using traditional methods handling naive independence problems. The main drawback is that this method assumes that fair variables are categorical and $|\mathcal{F}|$ is small. Meanwhile, when $|\mathcal{F}|$ becomes very large, representing that fair variables can take many different values, this naive method requires exactly $|\mathcal{F}|$ different models to deal with different subgroups, which has the potential of overfitting due to lack of training data in each subgroup. Furthermore, when fair variables are continuous, it becomes impossible to group by fair variables directly. Hence we need a more general framework to ensure the model's scalability.


Our solution to this problem is to learn a latent representation $Z$, which satisfies condition (\equationref{CF}). Suppose the representation has $m_Z$ dimensions, $g: \mathbb{R}^{m_X} \times \{0, 1\} \rightarrow \mathbb{R}^{m_Z}$ is the function from the space of $X$ and $S$ to representation space. The prediction function $k: \mathbb{R}^{m_Z} \rightarrow [0, 1]$ yields the probability of the sample in the positive class. The framework of our model is shown in \figureref{fig:framework}. We now rewrite \equationref{eq:problem_formulation} under this representation learning framework as:
\begin{equation}\equationlabel{total_loss}
    \boldsymbol{\theta} = \underset{\boldsymbol{\boldsymbol{\theta}}}{\arg\min}\, L_\text{pred}(k(g(X, S)), Y) + \lambda \cdot L_\text{fair}(g(X, S), F, S).
\end{equation}

\subsection{Conditional Independence}

In this section, we first introduce a conditional independence theorem proposed by \citet{daudin1980partial}. Afterward we will transform it into the form that can be applied to fairness problems. Finally we will give a regularizer to measure conditional independence.

\begin{lemma}[Characterization of conditional independence \cite{daudin1980partial}]\lemmalabel{daudin}
    The random variables $Z$, $S$ are independent conditional on $F$ ($Z \perp S \mid F$) if and only if, for any function $u \in L^2_{S}$, $\tilde{h} \in \mathcal{E}_{ZF}$,
    \begin{equation}
        \mathbb{E}[u(S) \cdot \tilde{h}(Z, F)] = 0,
    \end{equation}
    where
    \begin{align}
        L^2_S & = \left\{u(S) \mid \mathbb{E}[u^2] < \infty\right\}\equationlabel{L_S},\\
        L^2_{ZF} & = \left\{h(Z,F) \mid \mathbb{E}[h^2] < \infty\right\}\equationlabel{L_ZF},\\
        \mathcal{E}_{ZF} & = \left\{\tilde{h}(Z,F) \in L^2_{ZF} \mid \mathbb{E}[\tilde{h}|F] = 0\right\}\equationlabel{E_ZF}.
    \end{align}
\end{lemma}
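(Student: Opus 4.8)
The plan is to prove the two implications separately, since the statement is an equivalence. Throughout I will use the standard characterization that $Z \perp S \mid F$ holds if and only if the factorization $\mathbb{E}[g(Z)\,u(S)\mid F] = \mathbb{E}[g(Z)\mid F]\,\mathbb{E}[u(S)\mid F]$ holds almost surely for all bounded measurable $g,u$, together with the fact that finite linear combinations of product functions $g(Z)\,w(F)$ are dense in $L^2(\sigma(Z,F))$. The forward direction is a short computation; the reverse direction is where the real work lies, and I expect the density/approximation step to be the main obstacle.

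For the forward direction, I would assume $Z \perp S \mid F$, fix $u \in L^2_S$ and $\tilde h \in \mathcal{E}_{ZF}$, and condition on $F$ via the tower property, writing $\mathbb{E}[u(S)\,\tilde h(Z,F)] = \mathbb{E}\big[\,\mathbb{E}[u(S)\,\tilde h(Z,F)\mid F]\,\big]$. The key sublemma is that conditional independence lets the inner expectation factor as $\mathbb{E}[u(S)\mid F]\,\mathbb{E}[\tilde h(Z,F)\mid F]$; I would verify this first on product functions $\tilde h = g(Z)w(F)$, where it is immediate from the factorization characterization after pulling out the $F$-measurable factor $w(F)$, and then extend to all of $L^2_{ZF}$ by density and the $L^2$-continuity of conditional expectation. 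Since every $\tilde h \in \mathcal{E}_{ZF}$ satisfies $\mathbb{E}[\tilde h\mid F] = 0$ by definition, the inner expectation vanishes identically, hence so does $\mathbb{E}[u(S)\,\tilde h(Z,F)]$. Integrability is never an issue here, as Cauchy--Schwarz bounds the product by $\|u\|_2\|\tilde h\|_2 < \infty$.

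For the reverse direction, I would assume $\mathbb{E}[u(S)\,\tilde h(Z,F)] = 0$ for every $u \in L^2_S$ and $\tilde h \in \mathcal{E}_{ZF}$, and aim to recover the factorization. Given any bounded $b(Z,F)$ and any bounded $F$-measurable $w(F)$, the centered function $\tilde h := b\,w - \mathbb{E}[b\,w\mid F]$ lies in $\mathcal{E}_{ZF}$ by construction, so applying the hypothesis and pulling out $w(F) = \mathbb{E}[\,\cdot\,]$-factors gives the identity
\[
\mathbb{E}[u(S)\,w(F)\,b(Z,F)] = \mathbb{E}\big[\,\mathbb{E}[u(S)\mid F]\,\mathbb{E}[b(Z,F)\mid F]\,w(F)\,\big].
\]
I would then set $\psi := \mathbb{E}[b\mid S,F] - \mathbb{E}[b\mid F]$ and expand $\mathbb{E}[\psi\,u(S)\,w(F)]$ by the tower property: the two resulting terms collapse exactly onto the two sides of the identity above, yielding $\mathbb{E}[\psi\,u(S)\,w(F)] = 0$ for all bounded $u,w$.

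The final and most delicate step is to conclude $\psi = 0$ almost surely from its orthogonality to all products $u(S)\,w(F)$. Here I would specialize to indicators $u = \mathbf{1}_{\{S\in A\}}$ and $w = \mathbf{1}_{\{F\in B\}}$, so that $\mathbb{E}[\psi\,\mathbf{1}_{\{S\in A,\,F\in B\}}] = 0$ for all measurable $A,B$; since the rectangles $\{S\in A,\,F\in B\}$ form a $\pi$-system generating $\sigma(S,F)$ and $\psi$ is bounded and $\sigma(S,F)$-measurable, a monotone-class (Dynkin) argument upgrades this to $\mathbb{E}[\psi\,\mathbf{1}_C] = 0$ for every $C\in\sigma(S,F)$, forcing $\psi = 0$ a.s. This gives $\mathbb{E}[b\mid S,F] = \mathbb{E}[b\mid F]$ for every bounded $b(Z,F)$, which is precisely $Z \perp S \mid F$. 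The main obstacle is exactly this passage from integral (unconditional) identities to the pointwise conditional statement, i.e.\ justifying that orthogonality to all products $u(S)w(F)$ forces a $\sigma(S,F)$-measurable function to vanish; everything else reduces to the tower property and linearity of conditional expectation.
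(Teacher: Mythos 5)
Your argument is correct, but note that the paper does not prove this lemma at all: it is stated as an imported result, cited to Daudin (1980), and the appendix only proves the statements built on top of it (\propositionref{prop:CI} onward). So there is no in-paper proof to compare against; what you have written is a self-contained derivation of the cited characterization. Both directions check out. The forward direction is the easy one, and your route (factorize $\mathbb{E}[u(S)\tilde h(Z,F)\mid F]$ on products $g(Z)w(F)$, extend by density, then use $\mathbb{E}[\tilde h\mid F]=0$) works; a marginally shorter path is to write $\mathbb{E}[u(S)\tilde h(Z,F)]=\mathbb{E}\bigl[\tilde h(Z,F)\,\mathbb{E}[u(S)\mid Z,F]\bigr]$ and invoke $\mathbb{E}[u(S)\mid Z,F]=\mathbb{E}[u(S)\mid F]$ directly from the definition of conditional independence. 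The reverse direction contains the two genuinely load-bearing ideas, and you have both: (i) the observation that for any bounded $b(Z,F)$ and $F$-measurable $w$, the centered function $bw-\mathbb{E}[bw\mid F]$ lies in $\mathcal{E}_{ZF}$, so the hypothesis can be tested against it; and (ii) the monotone-class step upgrading orthogonality to all rectangles $\{S\in A\}\cap\{F\in B\}$ into $\psi=\mathbb{E}[b\mid S,F]-\mathbb{E}[b\mid F]=0$ a.s. Your identification of (ii) as the delicate step is right, and the $\pi$-$\lambda$ argument you sketch closes it, since $\psi$ is bounded and $\sigma(S,F)$-measurable. No gaps.
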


\lemmaref{daudin} is designed for general cases and can be simplified in fairness issues. Considering cases of one single binary sensitive attribute, which means $S$ is binary, the condition can be transformed into the following form.

\begin{proposition}\propositionlabel{prop:CI}
    If random variable $S$ is binary and $S \in \{0, 1\}$, the random variables $Z$, $S$ are independent conditional on $F$ ($Z \perp S \mid F$) if and only if, for any $\tilde{h} \in \mathcal{E}_{ZF}$,
    \begin{equation}\equationlabel{binary}
        \mathbb{E}[\mathbb{I}(S = 1) \cdot \tilde{h}(Z, F)] = 0,
    \end{equation}
    where $\mathcal{E}_{ZF}$ is shown in \equationref{E_ZF} and $\mathbb{I}(S = 1)$ is the indicative function defined as follow:
    \begin{equation}
        \mathbb{I}(S = 1) =
        \begin{cases}
            1, & \text{if}\quad S = 1,\\
            0, & \text{if}\quad S = 0.
        \end{cases}
    \end{equation}
\end{proposition}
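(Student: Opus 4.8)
The plan is to deduce the proposition directly from the characterization in \lemmaref{daudin}, exploiting the fact that when $S$ is binary the test-function space $L^2_S$ collapses to a two-dimensional space. Concretely, every $u \in L^2_S$ is determined by the pair $(u(0), u(1))$ and can be written as $u(S) = u(0) + (u(1) - u(0)) \cdot \mathbb{I}(S=1)$, so that $L^2_S = \mathrm{span}\{1,\, \mathbb{I}(S=1)\}$. This decomposition is the crux: it shows that testing against the single function $\mathbb{I}(S=1)$, together with the constant function, already exhausts all of $L^2_S$.

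For the forward implication I would simply specialize \lemmaref{daudin}. Assuming $Z \perp S \mid F$, the lemma gives $\mathbb{E}[u(S)\,\tilde{h}(Z,F)] = 0$ for every $u \in L^2_S$ and $\tilde{h} \in \mathcal{E}_{ZF}$; taking $u(S) = \mathbb{I}(S=1)$, which lies in $L^2_S$ because it is bounded, yields \equationref{binary} immediately.

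For the reverse implication, suppose \equationref{binary} holds for all $\tilde{h} \in \mathcal{E}_{ZF}$. Fix an arbitrary $u \in L^2_S$ and $\tilde{h} \in \mathcal{E}_{ZF}$, and substitute the decomposition above to obtain $\mathbb{E}[u(S)\,\tilde{h}] = u(0)\,\mathbb{E}[\tilde{h}] + (u(1)-u(0))\,\mathbb{E}[\mathbb{I}(S=1)\,\tilde{h}]$. The second summand vanishes by hypothesis. The key remaining step, and the one I expect to carry the real content, is to argue that the first summand vanishes as well. Here I would invoke the defining property of $\mathcal{E}_{ZF}$, namely $\mathbb{E}[\tilde{h}\mid F] = 0$ (\equationref{E_ZF}); the tower property then gives $\mathbb{E}[\tilde{h}] = \mathbb{E}[\mathbb{E}[\tilde{h}\mid F]] = 0$, so the constant contribution disappears. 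Hence $\mathbb{E}[u(S)\,\tilde{h}] = 0$ for every $u \in L^2_S$ and every $\tilde{h} \in \mathcal{E}_{ZF}$, and \lemmaref{daudin} delivers $Z \perp S \mid F$.

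The argument has no serious analytic obstacle; its only delicate point is recognizing that the conditional-mean-zero constraint built into $\mathcal{E}_{ZF}$ is exactly what allows us to discard the constant basis function and retain only $\mathbb{I}(S=1)$. I would also double-check the integrability bookkeeping, namely that $\mathbb{I}(S=1)\,\tilde{h}$ and $u(S)\,\tilde{h}$ are integrable so that the expectation splits as claimed; this follows from Cauchy--Schwarz since $u \in L^2_S$ and $\tilde{h} \in L^2_{ZF}$.
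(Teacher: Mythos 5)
Your proposal is correct and follows essentially the same route as the paper: the forward direction specializes \lemmaref{daudin} to $u = \mathbb{I}(S=1) \in L^2_S$, and the reverse direction uses the identical decomposition $u(S) = u(0) + (u(1)-u(0))\cdot\mathbb{I}(S=1)$ together with $\mathbb{E}[\tilde{h}] = \mathbb{E}[\mathbb{E}[\tilde{h}\mid F]] = 0$ to kill the constant term. The only (welcome) addition is your explicit Cauchy--Schwarz check of integrability, which the paper leaves implicit.
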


However, \propositionref{prop:CI} can hardly be applied to practice directly because of the complexity of function space $\mathcal{E}_{ZF}$ shown in \equationref{E_ZF}. Therefore, we need to transform \equationref{binary} to another form which depends on a simpler function space.

\begin{theorem}[Characterization with binary variable]\theoremlabel{thrm:chct_with_bnry_vrbls}
    If random variable $S$ is binary and $S \in \{0, 1\}$, the random variables $Z$, $S$ are independent conditional on $F$ ($Z \perp S \mid F$) if and only if, for any $h \in L^2_{ZF}$,
    \begin{equation}\equationlabel{Lh}
        Q(h) \overset{\Delta}{=} \mathbb{E}\left[\mathbb{I}(S = 1)P(S=0|F)h(Z,F)\right] - \mathbb{E}\left[\mathbb{I}(S=0)P(S=1|F)h(Z,F)\right] = 0.
    \end{equation}
\end{theorem}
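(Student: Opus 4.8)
The plan is to deduce this characterization from \propositionref{prop:CI} by means of a centering argument. \propositionref{prop:CI} already reduces conditional independence to the single family of test functions $\tilde h \in \mathcal{E}_{ZF}$, i.e.\ those square-integrable $h(Z,F)$ with $\mathbb{E}[\tilde h\mid F]=0$; the difficulty it leaves open is precisely the awkward constraint $\mathbb{E}[\tilde h\mid F]=0$ defining $\mathcal{E}_{ZF}$ in \equationref{E_ZF}, which is hard to enforce in practice. The guiding observation is that the functional $Q(h)$ in \equationref{Lh}, evaluated on an \emph{arbitrary} $h\in L^2_{ZF}$, secretly coincides with the expectation appearing in \equationref{binary} evaluated on the centered function $\tilde h := h - \mathbb{E}[h\mid F]$. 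If I can verify this single identity, both directions of the theorem follow almost immediately, since $\tilde h$ automatically lies in $\mathcal{E}_{ZF}$ and every element of $\mathcal{E}_{ZF}$ is its own centering.

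The key step is the algebraic simplification of $Q(h)$. First I would use $\mathbb{I}(S=0)=1-\mathbb{I}(S=1)$ and $P(S=0\mid F)=1-P(S=1\mid F)$ to expand the two terms of \equationref{Lh}; the cross terms cancel and leave
\[
Q(h) = \mathbb{E}\left[\mathbb{I}(S=1)\,h(Z,F)\right] - \mathbb{E}\left[P(S=1\mid F)\,h(Z,F)\right].
\]
Next I would rewrite the second term with the tower property, using $P(S=1\mid F)=\mathbb{E}[\mathbb{I}(S=1)\mid F]$ and pulling $P(S=1\mid F)$ inside the conditional expectation over $F$:
\[
\mathbb{E}\left[P(S=1\mid F)\,h(Z,F)\right] = \mathbb{E}\left[P(S=1\mid F)\,\mathbb{E}[h\mid F]\right] = \mathbb{E}\left[\mathbb{I}(S=1)\,\mathbb{E}[h\mid F]\right].
\]
Substituting this back yields the desired identity
\[
Q(h) = \mathbb{E}\left[\mathbb{I}(S=1)\left(h(Z,F)-\mathbb{E}[h\mid F]\right)\right] = \mathbb{E}\left[\mathbb{I}(S=1)\,\tilde h(Z,F)\right].
\]
Along the way I would record that $\tilde h = h - \mathbb{E}[h\mid F]$ is square-integrable (a conditional expectation is an $L^2$-contraction) and satisfies $\mathbb{E}[\tilde h\mid F]=0$, so that $\tilde h\in\mathcal{E}_{ZF}$.

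With the identity in hand the two implications are short. For the forward direction, assume $Z\perp S\mid F$; then by \propositionref{prop:CI} the expectation $\mathbb{E}[\mathbb{I}(S=1)\tilde h]$ vanishes for the centered $\tilde h\in\mathcal{E}_{ZF}$, hence $Q(h)=0$ for every $h\in L^2_{ZF}$. For the converse, assume $Q(h)=0$ for all $h\in L^2_{ZF}$; since $\mathcal{E}_{ZF}\subseteq L^2_{ZF}$, I may take $h=\tilde h$ to be any element of $\mathcal{E}_{ZF}$, for which $\mathbb{E}[\tilde h\mid F]=0$ makes the centering trivial and the identity reads $\mathbb{E}[\mathbb{I}(S=1)\tilde h]=Q(\tilde h)=0$; \propositionref{prop:CI} then yields $Z\perp S\mid F$. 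The main obstacle is really the bookkeeping in the key step: getting the cancellation of the cross terms right and applying the tower property cleanly, together with the routine check that $\tilde h$ remains in $L^2_{ZF}$ so that \propositionref{prop:CI} is genuinely applicable.
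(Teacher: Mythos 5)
Your proof is correct and follows essentially the same route as the paper's: both establish the identity $Q(h)=\mathbb{E}[\mathbb{I}(S=1)\,\tilde h]$ for the centered function $\tilde h = h-\mathbb{E}[h\mid F]$ via the same cross-term cancellation and tower-property steps, and then invoke \propositionref{prop:CI} in both directions exactly as you do. Your added remark that $\tilde h$ remains square-integrable is a small detail the paper leaves implicit, but the argument is otherwise identical.
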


Compared with function space $\mathcal{E}_{ZF}$ in \propositionref{prop:CI}, $L_{ZF}$ space in \theoremref{thrm:chct_with_bnry_vrbls} is much simpler. Based on this theorem, we propose the following regularizer:
\begin{definition}[Derivable Conditional Fairness Regularizer]\definitionlabel{def:regularizer}
    $Z, F, S$ are random variables. $S$ is binary and $S \in \{0, 1\}$. $Q(h)$ is defined in \equationref{Lh}. Define the regularizer $L_\text{fair}(Z,F,S)$
    \begin{equation}\equationlabel{eq:regularizer}
        L_\text{fair}(Z,F,S) \overset{\Delta}{=} \sup_{h \in H_{ZF}} |Q(h)|,
    \end{equation}
    where
    \begin{equation}\equationlabel{eq:H_ZF}
        H_{ZF} = \left\{h \in L^2_{ZF} | 0 \leq h(Z,F) \leq 1\right\}.
    \end{equation}
\end{definition}

The motivations of regularizer are that, firstly, notice that if there exists a function $h \in L^2_{ZF}$ so that $Q(h) \ne 0$, then $\sup_{h \in L^2_{ZF}} |Q(h)|$ can be arbitrarily large. And the $L^2_{ZF}$ space is too large for further analysis. Therefore we first bound the range of $h$ into $[0, 1]$, which produces the $H_{ZF}$ space. Secondly, when $L_\text{fair}(Z,F,S) = 0$, according to \theoremref{thrm:chct_with_bnry_vrbls}, the random variables $Z$ and $S$ are independent conditional on $F$.

Furthermore, we can simplify \equationref{eq:regularizer} by the following theorem.
\begin{theorem}\theoremlabel{thrm:CI_loss}
    $L_\text{fair}(Z,F,S)$, $H_{ZF}$, and $Q(h)$ are defined in \theoremref{thrm:chct_with_bnry_vrbls} and \definitionref{def:regularizer}. Then
    \begin{equation}\equationlabel{CI_loss}
        L_\text{fair}(Z,F,S) = \sup_{h \in H_{ZF}} |Q(h)| = \sup_{h \in H_{ZF}} Q(h).
    \end{equation}
\end{theorem}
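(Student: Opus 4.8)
The plan is to exploit two structural features of the functional $Q$ and the admissible set $H_{ZF}$: the linearity of $h \mapsto Q(h)$, and the fact that $H_{ZF}$ is invariant under the reflection $h \mapsto 1 - h$. The inequality $\sup_{h \in H_{ZF}} Q(h) \le \sup_{h \in H_{ZF}} |Q(h)|$ is immediate, so the entire content lies in the reverse inequality, and this will follow once I show that the image $\{Q(h) : h \in H_{ZF}\}$ is symmetric about the origin.

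First I would record that $Q$ is linear in its argument, being a difference of two expectations each linear in $h$, so that $Q(1 - h) = Q(1) - Q(h)$, where $1$ denotes the constant function. Since $0 \le h \le 1$ forces $0 \le 1 - h \le 1$, and $1-h$ remains square-integrable on the underlying probability space, the reflection $h \mapsto 1 - h$ maps $H_{ZF}$ into itself; hence $1 - h$ is an admissible competitor whenever $h$ is.

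The crux is the computation $Q(1) = 0$. Substituting $h \equiv 1$ into the definition of $Q$ and using the tower property to condition on $F$, I would rewrite
\[
\mathbb{E}\left[\mathbb{I}(S=1)P(S=0|F)\right] = \mathbb{E}\left[P(S=1|F)\,P(S=0|F)\right],
\]
and, by the same step,
\[
\mathbb{E}\left[\mathbb{I}(S=0)P(S=1|F)\right] = \mathbb{E}\left[P(S=0|F)\,P(S=1|F)\right].
\]
These two expressions coincide, so their difference $Q(1)$ vanishes, and therefore $Q(1 - h) = -Q(h)$ for every $h \in H_{ZF}$.

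Finally I would assemble the pieces. For any $h \in H_{ZF}$, both $Q(h)$ and $-Q(h) = Q(1-h)$ are values attained by admissible functions, so $\sup_{h \in H_{ZF}} Q(h) = \sup_{h \in H_{ZF}}\bigl(-Q(h)\bigr)$. Writing $\sup_{h} |Q(h)| = \max\bigl\{\sup_{h} Q(h),\ \sup_{h}(-Q(h))\bigr\}$ and using the equality just established, both quantities inside the maximum agree, which yields $\sup_{h \in H_{ZF}} |Q(h)| = \sup_{h \in H_{ZF}} Q(h)$, as claimed. I do not expect a genuine obstacle: the only step requiring care is the conditioning identity giving $Q(1)=0$, which rests on rewriting $\mathbb{E}[\mathbb{I}(S=1)\,\cdot\,]$ as $\mathbb{E}[P(S=1|F)\,\cdot\,]$ — essentially the same manipulation underlying \theoremref{thrm:chct_with_bnry_vrbls}.
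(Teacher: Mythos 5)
Your proof is correct and follows essentially the same route as the paper: both hinge on the reflection $h \mapsto 1-h$ preserving $H_{ZF}$ together with the tower-property computation showing $Q(h) + Q(1-h) = 0$ (your $Q(1)=0$ plus linearity is the same identity). If anything, your final assembly via $\sup_h |Q(h)| = \max\{\sup_h Q(h), \sup_h(-Q(h))\}$ is stated more carefully than the paper's, which drops some minus signs in its concluding display.
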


\theoremref{thrm:CI_loss} provides a computationally amenable form, which serves as a regularizer and is applied to the prediction loss function.

To better understand $Q(h)$, we transform the \equationref{Lh} as a weighted L1 loss when using $h(Z,F)$ to predict $S$.
\begin{equation}\equationlabel{weighted_expected_L}
    \begin{aligned}
        & Q(h)\\
        = &\mathbb{E}[\mathbb{I}(S = 1)P(S=0|F)h(Z,F)] - \mathbb{E}[\mathbb{I}(S=0)P(S=1|F)h(Z,F)]\\
        = & C - \left[\mathbb{E}[\mathbb{I}(S = 1)P(S=0 | F)(1-h)] + \mathbb{E}[\mathbb{I}(S=0)P(S=1 | F)h]\right],
    \end{aligned}
\end{equation}
where
\begin{equation}\equationlabel{eq:constant}
    C=\mathbb{E}[\mathbb{I}(S = 1)P(S=0|F)]
\end{equation}
is a constant.

The term in the bracket is a weighted L1 loss and conditional probability $P(S=\cdot|F)$ is the weight. Actually, this weight can be learned from finite samples with any non-parametric or parametric algorithms such as regression methods. Therefore our model can be applied to real large datasets with continuous fair variables.

In addition, we give the formulation of $Q(h)$ when fair variables are categorical in order to apply it to simpler circumstances, such as demographic parity or equalized odds. In detail, we can use $|D(S=1-S_i, F=F_i)| / |D(F=F_i)|$ to estimate the weight for the $i$-th sample, therefore \equationref{weighted_expected_L} can be written as
\begin{equation}\equationlabel{weighted_sample_L}
    \begin{aligned}
        Q(h) & \approx C - \frac{1}{N}\left[\sum_{i=1}^N\frac{|D(S=1-S_i, F=F_i)|}{|D(F=F_i)|}|h(F_i, Z_i) - S_i|\right].
    \end{aligned}
\end{equation}

\subsection{Adversarial Learning}
Now we combine the total loss function as shown in \equationref{total_loss} and the conditional independence in \equationref{CI_loss} and get:
\begin{equation}\equationlabel{final_loss}
    \begin{aligned}
        \boldsymbol{\theta} = & \underset{g, k}{\arg\min}\, L_\text{pred}(k(g(X, S)), Y) + \lambda \cdot \sup_h Q(h)\\
        = & \underset{g, k}{\arg\min}\, \sup_h L_\text{pred}(k(g(X, S)), Y) + \lambda \cdot Q(h).
    \end{aligned}
\end{equation}
As $Q(h)$ is actually a weighted L1 loss, the loss function above can be optimized with the method of adversarial learning by setting the $Q(h)$ as the adversarial loss. There are several works that use adversarial learning to solve fairness notions. While the frameworks among these works are similar, the main difference lies in the design of loss functions.

Our method is most closed to LAFTR \cite[]{madras2018learning}. And actually, when $F = \emptyset$, which means the conditional independence constraint $\hat{Y} \perp S \mid F$ is reduced to $\hat{Y} \perp S$, our method is exactly the same as theirs.

Consider $Q(h)$ with finite samples as shown in \equationref{weighted_sample_L}, when $F = \emptyset$, the weight of sample $i$ is actually $|D(S = 1 - S_i)| / N$. Multiply \equationref{weighted_sample_L} with $a = N^2 / \left(|D(S=0)| \cdot |D(S=1)|\right)$ and we get
\begin{equation}\equationlabel{LCFR-DP}
    Q_\text{DP}(h) \overset{\Delta}{=} a \cdot Q(h) \approx C' - \sum_{i=1}^N \frac{1}{|D(S=S_i)|}|h(Z_i) - S_i|,
\end{equation}
which becomes the same as the adversarial loss function provided by \cite{madras2018learning}.

When facing with equalized odds task, we can replace the fair variables $F$ in \equationref{weighted_sample_L} with $Y$ and get:
\begin{equation}\equationlabel{LCFR_EO}
    Q_\text{EO}(h) \overset{\Delta}{=} C - \frac{1}{N}\left[\sum_{i=1}^N\frac{|D(S=1-S_i, Y=Y_i)|}{|D(Y=Y_i)|}|h(Y_i, Z_i) - S_i|\right].
\end{equation}
With \equationref{LCFR-DP} and \equationref{LCFR_EO}, we can apply our method into demographic parity and equalized odds target.

\subsection{Practical Implementation}

\renewcommand{\algorithmicrequire}{\textbf{Input:}}
\renewcommand{\algorithmicensure}{\textbf{Output:}}
\begin{algorithm}
    \caption{\emph{Derivable Conditional Fairness Regularizer (DCFR)}}
    \algorithmlabel{alg}
    \begin{algorithmic}[1]
        \REQUIRE Dataset $D=(X,Y,S)$, $X = (F,O)$, EPOCH, BATCH\_SIZE, ADV\_STEPS.
        \ENSURE $g$, $k$, $h$ as in \equationref{final_loss}
        \STATE /* Step I */
        \FOR{epoch\_i $\leftarrow$ 1 to EPOCH}
            \STATE Random mini-batch $D' = (X'=(F', O'), Y', S')$ from $D$.
            \STATE Freeze $h$. Unfreeze $g, k$.
            \STATE Optimize $g, k$ with gradient descent according to $D'$.
            \STATE Freeze $g, k$. Unfreeze $h$.
            \FOR{adv\_step $\leftarrow$ 1 to ADV\_STEPS}
                \STATE Optimize $h$ with gradient descent according to $D'$.
            \ENDFOR
        \ENDFOR
        \STATE
        \STATE /* Step II */
        \STATE Freeze $g, h$. Unfreeze $k$.
        \FOR{epoch\_i $\leftarrow$ 1 to EPOCH}
            \STATE Random mini-batch $D' = (X'=(F', O'), Y', S')$ from $D$.
            \STATE Optimize $k$ with gradient descent according to $D'$.
            \IF{accuracy on validation set does not increase for continuous 20 epochs}
                \STATE Break.
            \ENDIF
        \ENDFOR
        \RETURN $g, k, h$.
    \end{algorithmic}
\end{algorithm}

In practice, we cannot enumerate all the functions in $H_{ZF}$, we use an MLP with sigmoid as an estimation. Furthermore, we find it difficult to optimize with L1 loss as we use sigmoid functions to bound the $h$ into $[0, 1]$ and this can result in vanishing gradient problem in practice. Instead, We define the L2 loss function $Q'(h)$ as the surrogate of $Q(h)$ and the corresponding conditional independence regularizer $L'_\text{fair}$.
\begin{equation}
    Q'(h) \overset{\Delta}{=} C - \left[\mathbb{E}\left[\mathbb{I}(S = 1)P(S=0|F) \cdot (1-h)^2\right] + \mathbb{E}\left[\mathbb{I}(S=0)P(S=1|F) \cdot h^2\right]\right],
\end{equation}
where $C$ is the constant defined in \equationref{eq:constant}. The corresponding regularizer is
\begin{equation}
    L'_\text{fair} \overset{\Delta}{=} \sup_{h \in H_{ZF}} Q'(h).
\end{equation}

To show that directly optimizing L2 loss could also reach our target, we give the following theorem.
\begin{theorem}\theoremlabel{thrm:L2}
    $L'_\text{fair}$ provides an upper bound of $L_\text{fair}$, i.e.
    \begin{equation}
        L'_\text{fair} \ge L_\text{fair}.
    \end{equation}
\end{theorem}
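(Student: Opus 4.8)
The plan is to establish the inequality pointwise in $h$ and then pass to the supremum over the common feasible set. First I would note that both $L_\text{fair}$ and $L'_\text{fair}$ are suprema taken over the \emph{same} domain $H_{ZF}$, whose defining constraint is $0 \le h(Z,F) \le 1$; by \theoremref{thrm:CI_loss} we may write $L_\text{fair} = \sup_{h \in H_{ZF}} Q(h)$, and by \definitionref{def:regularizer} the surrogate satisfies $L'_\text{fair} = \sup_{h \in H_{ZF}} Q'(h)$. This shared boundedness constraint is precisely what drives the argument, so the crux is a term-by-term comparison of $Q(h)$ and $Q'(h)$ for a fixed admissible $h$.

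The elementary fact I would exploit is that for any real $t \in [0,1]$ one has $t^2 \le t$, since $t(1-t) \ge 0$; applying this with $t = h(Z,F)$ and with $t = 1 - h(Z,F)$ (both valid because $h \in H_{ZF}$) yields $h^2 \le h$ and $(1-h)^2 \le (1-h)$ pointwise. Now I would compare the two bracketed loss terms. The weighting factors $\mathbb{I}(S = 1)P(S=0|F)$ and $\mathbb{I}(S=0)P(S=1|F)$ are nonnegative, so multiplying the pointwise inequalities by these weights and taking expectations preserves the direction of the inequality. Hence the bracketed expression in $Q'(h)$ is no larger than that in $Q(h)$; since both are subtracted from the same constant $C$, this gives $Q'(h) \ge Q(h)$ for every $h \in H_{ZF}$.

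Finally I would take the supremum. Because $Q'(h) \ge Q(h)$ holds for each individual $h$, we get $\sup_{h \in H_{ZF}} Q'(h) \ge Q(h)$ for all $h$, and taking the supremum of the right-hand side over $H_{ZF}$ yields $L'_\text{fair} \ge L_\text{fair}$, as claimed. I do not anticipate a genuine obstacle; the only points requiring care are that the nonnegativity of the conditional probabilities and the indicator weights is exactly what lets the pointwise inequality survive under the expectation, and that both regularizers optimize over the identical constraint set $H_{ZF}$ (rather than the unbounded $L^2_{ZF}$), so no rescaling or domain mismatch needs to be reconciled.
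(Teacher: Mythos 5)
Your proposal is correct and follows essentially the same route as the paper's own proof: the pointwise inequalities $h^2 \le h$ and $(1-h)^2 \le 1-h$ on $H_{ZF}$, preserved under the nonnegative weights and the expectation, give $Q'(h) \ge Q(h)$ for each $h$, and taking the supremum over the common domain yields the claim. No gaps.
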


With this theorem, it makes sense to directly optimize with L2 loss, as L1 loss will decrease synchronously with L2 loss during the optimization process. Using L2 loss instead of L1 loss makes the algorithm much easier to converge in real experiments.

Our algorithm has two steps. We train the model $g$, $h$, $k$ adversarially firstly, and afterward we fine-tune the function $k$ for better performance. During the training step, for each sampled mini-batch, we train predictor part $g$ and $k$ once and train adversarial part $h$ for several times. The number of adversarial steps is also a hyper-parameter. During the fine-tuning step, we run the models with early stop when the accuracy on the validation set does not increase for continuous 20 epochs. The pseudo-code is shown in \algorithmref{alg}.

\section{Experiments}
In this section, we provide the experimental settings and verify the effectiveness of our method in multiple real datasets.
\begin{figure*}[t]
    \centering
    \includegraphics[width=0.9\textwidth]{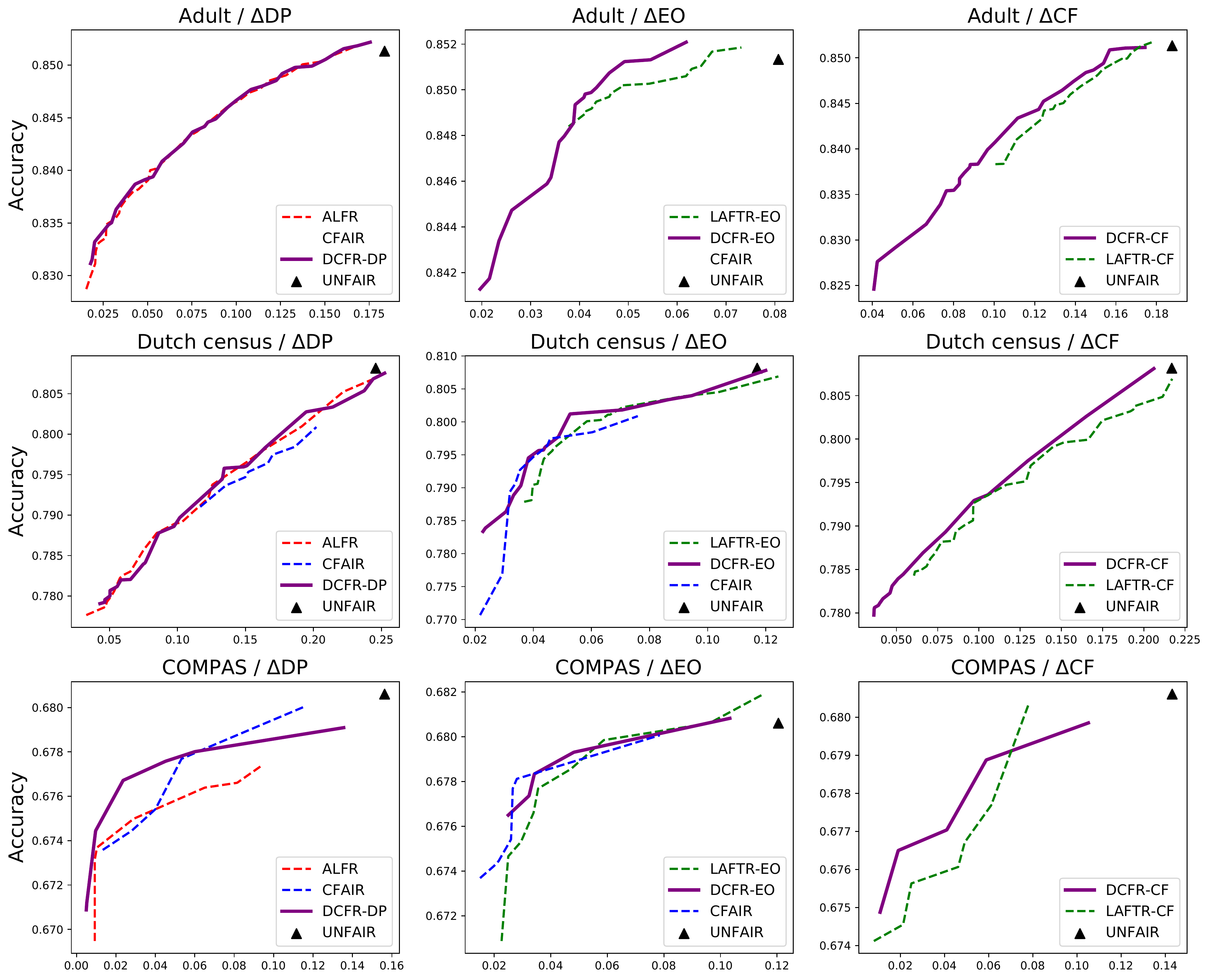}
    \caption{The accuracy-fairness trade-off curves for different fairness metrics ($\Delta DP$, $\Delta EO$, $\Delta CF$ from left to right) on various datasets (Adult, Dutch census, COMPAS dataset from top to bottom, with $|\mathcal{F}|=14,7,2$ respectively). The upper-left corner is preferred. Our method is shown in bolded lines. The UNFAIR algorithm is a triangle mark while other baselines are in dashed lines. We take different values of $\lambda$ from 0.1 to 20, get the mean of accuracy and fairness metric across 5 runs for each model, and plot the Pareto front on the test dataset. While our model performs similarly on $\Delta DP$ and $\Delta EO$ task with baselines, with the increase of $|\mathcal{F}|$, our method performs much better than baselines on $\Delta CF$ task. Note that we do not plot the curve of CFAIR in the Adult dataset because the curve goes beyond the axis range.}
    \figurelabel{fig:result}
\end{figure*}

\subsection{Datasets}
We perform experiments on three real-world datasets that are widely used in fair machine learning problems, including the Adult dataset \cite[]{Dua2019uci}, the Dutch census dataset \cite[]{impus2019}, and the COMPAS dataset \cite[]{larson2016we}.

\begin{itm}
    \item \textbf{Adult}: The goal of the Adult dataset is to predict whether a person makes more than \$50k per year or not. Each instance contains 112 attributes including sex, gender, education level, occupation, etc. In our experiments, we set gender as the sensitive attribute, and consider occupation (with 14 possible categorical values) as the fair variable. The target variable (income) is binary and we set "$\ge$ \$50k per year" as the favored outcome.
    \item \textbf{Dutch census}: This dataset is sampled from the Dutch census dataset, which is conducted by Statistics Netherlands to predict whether a person has a prestigious occupation. Each instance contains 35 attributes including age, gender, marital status, etc. In our experiments, we set gender as the sensitive attribute and level of educational attainment (with 7 possible categorical values) as the fair variable. The target variable is binary and we set "having a prestigious occupation" as the favored outcome.
    \item \textbf{COMPAS}: The COMPAS dataset aims to predict whether a criminal defendant will recidivate within two years or not. Each instance contains 11 attributes including age, race, gender, number of prior crimes, etc. In our experiments, we set race as the sensitive attribute and set the charge degree (with 2 possible categorical values) as the fair variable. The target variable (recidivism or not) is binary and we define "not recidivism" as the favored outcome.
\end{itm}

As a summary, the basic statistics of the datasets are listed in \tableref{tab:dataset}.
\begin{table}[ht]
    \centering
    \caption{Basic statistics of the datasets}
    \tablelabel{tab:dataset}
    \begin{tabular}{ccccccc}
        \toprule
        Dataset & Train/Test & $P(S=0)$ & $P(Y=1)$ & $|\mathcal{F}|$\\
        \midrule
        \textbf{Adult} & 30,162/15,060 & 0.325 & 0.248 & 14\\
        \textbf{Dutch census} & 27,060/11,595 & 0.492 & 0.521 & 7\\
        \textbf{COMPAS} & 4,321/1,851 & 0.659 & 0.455 & 2\\
        \bottomrule
    \end{tabular}
\end{table}

\subsection{Baselines}

As adversarial representation learning has become a prominent solver for fairness-related constrained optimization problems, we also adopt it to solve our target problem. For a fair comparison, we mainly select the following state-of-the-art fairness optimization algorithms that are also solved by adversarial representation learning as baseline methods.

\begin{itm}
    \item \textbf{UNFAIR}: We design a baseline predictive model without any fairness constraint by setting $\lambda$ to be 0 in \equationref{final_loss}.
    \item \textbf{ALFR} \cite[]{edwards2015censoring}: ALFR is specifically designed for demographic parity problems.
    \item \textbf{CFAIR} \cite[]{zhao2020conditional}: CFAIR aims to mitigate the gap of demographic parity and equalized odds simultaneously.
    \item \textbf{LAFTR} \cite[]{madras2018learning}: LAFTR consists of two different loss functions, which target demographic parity and equalized odds respectively. Therefore, we implement two variants \textbf{LAFTR-DP} and \textbf{LAFTR-EO}.
\end{itm}

For general conditional fairness, since none of the methods above propose the method to handle this situation explicitly, we extend the method of LAFTR-EO by replacing the conditional target $Y$ as $F$ in adversarial loss, namely $\textbf{LAFTR-CF}$. In detail, the original target adversarial loss function in LAFTR is
\begin{equation}
    L_{A d v}^{E O}(h)=2-\sum_{i=1}^N \frac{1}{|D(Y=Y_i, S=S_i)|}|h(Z_i)-S_i|.
\end{equation}
We transform it into conditional fairness setting as
\begin{equation}\equationlabel{laftr_cf}
    L_{A d v}^{CF}(h)=|\mathcal{F}|-\sum_{i=1}^N \frac{1}{|D(F=F_i, S=S_i)|}|h(Z_i)-S_i|.
\end{equation}
The differences between the equation above and our methods lie on the input of function $h$ and the sample weight. \equationref{laftr_cf} tends to assign relatively high weights to minority groups with the same $F$ compared with majority groups, which may lose stability while our method treats different groups divided by $F$ equally. 

As conditional fairness is a general notion that encompasses the demographic parity and equalized odds, we implement the following three variants of our method:

\begin{itm}
    \item \textbf{DCFR-DP}: We transform our method by setting $\mathcal{F}$ to be a null set and optimize it by \equationref{LCFR-DP}, so that it can be used directly to solve demographic parity problems.
    \item \textbf{DCFR-EO}: We transform our method by setting $\mathcal{F}$ to be $\mathcal{Y} = \{0,1\}$ and optimize it by \equationref{LCFR_EO}, so that it can be used directly to solve equalized odds problems.
    \item \textbf{DCFR-CF}: We use the general form of our method to solve general conditional fairness problems.
\end{itm}

For demographic parity, we compare our DCFR-DP with ALFR, CFAIR, and UNFAIR. For equalized odds, we compare our DCFR-EO with LAFTR-EO, CFAIR, and UNFAIR, while ALFR cannot handle this fairness target. For conditional fairness, we mainly compare our DCFR-CF with LAFTR-CF. Note that CFAIR method can hardly be applied to conditional fairness target as it requires $|\mathcal{F}|$ different adversarial predictors when calculating adversarial loss, which is impractical in real problems. For the sake of fair comparison and easier convergence, we replace L1 loss with L2 loss function for adversary losses in LAFTR model and our model DCFR.
In addition, we use cross-entropy loss as the prediction loss function in our model DCFR.

As fair variables are categorical in these experiments, we use $\Delta DP$, $\Delta EO$, $\Delta CF$ as evaluation metrics, and smaller values of these metrics mean higher fairness. More experimental details are shown in the appendix.

\subsection{Results}
The results are shown in \figureref{fig:result}. The columns show the accuracy-fairness trade-off curves for demographic parity, equalized odds, and conditional fairness respectively, and the rows correspond to different datasets.

For the tradeoff curves, there are two observation points.
\begin{enum}
    \item If a curve is closer to the left-top point than other curves in the majority range of an evaluation metric, the corresponding method is better. Because it means that given a certain degree of fairness, the method can achieve higher prediction accuracy, while given a certain prediction accuracy, the method can achieve better fairness.
    \item As a fair algorithm, it is important to evaluate how much fairness can be achieved, which is indicated by the left-end point of a curve.
\end{enum}

From \figureref{fig:result}, we can get the following observations.

\begin{itm}
    \item For the conditional fairness task, it is obvious that our DCFR-CF is more advantageous than LAFTR-CF in the sense of both two observation points. In the COMPAS dataset, both methods can reach similar fairness ranges, while our DCFR-CF can achieve better trade-off performance. In Adult and Dutch census datasets, the two curves are close, while DCFR-CF can reach a higher fairness region than LAFTR-CF, which is more obvious in the Adult dataset. The plausible reason is that a larger $|\mathcal{F}|$ in Adult will make the limitation of LAFTR-CF more obvious as it is designed in the context of one single binary conditional variable. 
    \item For the demographic parity and equalized odds tasks, the degenerated variants of our method produce comparable performances with state-of-the-art baselines that are specifically designed for these tasks. In some datasets, our method reports even better results, for example, in the Adult dataset of $\Delta EO$ setting and COMPAS dataset in $\Delta DP$ setting. We attribute this to the strong expressive ability of DCFR.
    \item The overall performance of CFAIR is not satisfactory, especially in the Adult dataset where the curve of CFAIR goes beyond the axis range. We notice that, as shown in \tableref{tab:dataset}, $Y$ is seriously biased in the Adult dataset. In CFAIR, however, the balanced error rate is used in optimization. 
\end{itm}
\section{Conclusions}
In this work, we propose the conditional fairness concerning fair variables and show that it is a general fairness notion with several practical reductions. However, conditional fairness is difficult to optimize directly as it cannot be written as a derivable loss function straightforwardly especially when fair variables are continuous or contain many categorical values. Inspired by conditional independence test methods, we derive an equivalent condition of conditional independence under fairness settings. Based on the equivalent condition, we propose a conditional independence regularizer that can be integrated into gradient-based methods, namely Derivable Conditional Fairness Regularizer (DCFR). We apply the regularizer into the representation learning framework and solve it with adversarial learning. We validate the effectiveness of our method on real datasets and achieve good performance on conditional fairness targets. It is worth mentioning that our method becomes much better than baselines when the number of potential values of fair variable increases.

Potential future work is to apply our method into unsupervised settings as the conditional fairness notion does not rely on $Y$ under most circumstances. With the information of $F$, we can ideally get a more elaborate representation compared with demographic parity. Besides, we find it difficult to measure the performance between different models. On the one hand, the target fairness notions of various models are usually different, which makes it impossible to compare with each other. On the other hand, even for the same fairness target, the most common practice is to plot the fairness-utility trade-off curve, which cannot become an accurate metric. This issue remains open and we believe it is worthwhile to study on.

\section*{Acknowledgment}
This work was supported in part by National Key R\&D Program of China (No. 2018AAA0102004, No. 2018AAA0101900), National Natural Science Foundation of China (No. U1936219, 61772304, U1611461, 71490723, 71432004), Beijing Academy of Artificial Intelligence (BAAI), the Fundamental Research Funds for the Central Universities, Tsinghua University Initiative Scientific Research Grant (No. 2019THZWJC11), Science Foundation of Ministry of Education of China (No. 16JJD630006), and a grant from the Institute for Guo Qiang, Tsinghua University. All opinions, findings, conclusions and recommendations in this paper are those of the authors and do not necessarily reflect the views of the funding agencies.

\clearpage

\bibliographystyle{plainnat}
\bibliography{references}

\clearpage
\appendix
\newtheorem*{theorem*}{Theorem}
\newtheorem*{proposition*}{Proposition}

\section{Proofs}
\subsection{Proof of \texorpdfstring{\propositionref{prop:CI}}{Proposition}}
\begin{proposition*}
    If random variable $S$ is binary and $S \in \{0, 1\}$, the random variables $Z$, $S$ are independent conditional on $F$ ($Z \perp S \mid F$) if and only if, for any $\tilde{h} \in \mathcal{E}_{ZF}$,
    $$
        \mathbb{E}[\mathbb{I}(S = 1) \cdot \tilde{h}(Z, F)] = 0,
    $$
    where $\mathcal{E}_{ZF}$ is shown in \equationref{E_ZF} and $\mathbb{I}(S = 1)$ is the indicative function defined as follow:
    $$
        \mathbb{I}(S = 1) =
        \begin{cases}
            1, & \text{if}\quad S = 1,\\
            0, & \text{if}\quad S = 0.
        \end{cases}
    $$
\end{proposition*}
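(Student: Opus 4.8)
The plan is to deduce this directly from \lemmaref{daudin}, exploiting the fact that a binary $S$ makes the relevant part of the function space $L^2_S$ essentially two-dimensional. The forward (``only if'') direction is immediate: if $Z \perp S \mid F$, then \lemmaref{daudin} gives $\mathbb{E}[u(S)\tilde{h}(Z,F)] = 0$ for every $u \in L^2_S$ and every $\tilde{h} \in \mathcal{E}_{ZF}$, and since the indicator $\mathbb{I}(S=1)$ is a bounded function of $S$ it lies in $L^2_S$; specializing to $u(S) = \mathbb{I}(S=1)$ then yields \equationref{binary}.

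For the reverse direction, I would show that the single test function $\mathbb{I}(S=1)$ already recovers the full family required by \lemmaref{daudin}. The key observation is that when $S \in \{0,1\}$, any $u \in L^2_S$ is determined by the two values $u(0)$ and $u(1)$, so it admits the decomposition
$$u(S) = u(0) + \big(u(1) - u(0)\big)\,\mathbb{I}(S=1).$$
Writing $a = u(0)$ and $b = u(1) - u(0)$ and using linearity of expectation,
$$\mathbb{E}[u(S)\,\tilde{h}(Z,F)] = a\,\mathbb{E}[\tilde{h}(Z,F)] + b\,\mathbb{E}[\mathbb{I}(S=1)\,\tilde{h}(Z,F)].$$

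The two terms on the right then vanish for separate reasons. The second term is zero exactly by the hypothesis \equationref{binary}. For the first term, recall that any $\tilde{h} \in \mathcal{E}_{ZF}$ satisfies $\mathbb{E}[\tilde{h}\mid F] = 0$ by the definition in \equationref{E_ZF}, so the tower property gives $\mathbb{E}[\tilde{h}(Z,F)] = \mathbb{E}\big[\mathbb{E}[\tilde{h}\mid F]\big] = 0$. Hence $\mathbb{E}[u(S)\,\tilde{h}(Z,F)] = 0$ for every $u \in L^2_S$ and every $\tilde{h} \in \mathcal{E}_{ZF}$, and \lemmaref{daudin} then delivers $Z \perp S \mid F$.

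I do not anticipate a genuine obstacle; the argument is a short reduction rather than a substantial computation. The only point requiring care is that the constant component $a$ of $u$ must be annihilated, which is precisely where the defining constraint $\mathbb{E}[\tilde{h}\mid F]=0$ of $\mathcal{E}_{ZF}$ enters—without it the single indicator $\mathbb{I}(S=1)$ would not suffice to test against all of $L^2_S$. One should also confirm that $\mathbb{I}(S=1) \in L^2_S$ so that the specialization in the forward direction is legitimate, but this is automatic for a bounded function of a binary variable.
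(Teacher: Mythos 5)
Your proposal is correct and follows essentially the same route as the paper's proof: both directions rest on specializing \lemmaref{daudin} to $u=\mathbb{I}(S=1)$ and, conversely, on the decomposition $u(S)=u(0)+(u(1)-u(0))\,\mathbb{I}(S=1)$ together with the tower property applied to $\mathbb{E}[\tilde h\mid F]=0$ to kill the constant part. No substantive differences to report.
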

\begin{proof}
    On the one hand, $\mathbb{I}(S=1) \in L_S^2$. Thus when $Z$, $S$ are independent conditional on $F$, for any $\tilde{h} \in \mathcal{E}_{ZF}$, according to \lemmaref{daudin}, $\mathbb{E}[\mathbb{I}(S=1) \cdot \tilde{h}]=0$.

    On the other hand, $S \in \{0, 1\}$. Therefore for any $u \in L_S^2$, function $u$ can be expressed as:
    $$
        \begin{aligned}
            u(S) & = a \cdot \mathbb{I}(S = 0) + b \cdot \mathbb{I}(S = 1)\\
            & = a + (b - a) \cdot \mathbb{I}(S = 1),
        \end{aligned}
    $$
    where $a, b \in \mathbb{R}$. Thus for any $\tilde{h} \in \mathcal{E}_{ZF}$, when \equationref{binary} is satisfied,
    $$
        \begin{aligned}
            \mathbb{E}[u \cdot \tilde{h}] & = \mathbb{E}\left[\left(a + (b - a)\cdot \mathbb{I}(S=1)\right)\tilde{h}\right]\\
            & = a \cdot \mathbb{E}[\tilde{h}] + (b - a) \cdot \mathbb{E}[\mathbb{I}(S=1)\cdot \tilde{h}]\\
            & = a \cdot \mathbb{E}[\mathbb{E}[\tilde{h} | F]]\\
            & = 0.
        \end{aligned}
    $$
    As a result, according to \lemmaref{daudin}, $Z$ and $S$ are independent conditional on $F$.
\end{proof}

\subsection{Proof of \texorpdfstring{\theoremref{thrm:chct_with_bnry_vrbls}}{Theorem}}
\begin{theorem*}
    If random variable $S$ is binary and $S \in \{0, 1\}$, the random variables $Z$, $S$ are independent conditional on $F$ ($Z \perp S \mid F$) if and only if, for any $h \in L^2_{ZF}$,
    $$
        \begin{aligned}
            Q(h) & \overset{\Delta}{=} \mathbb{E}\left[\mathbb{I}(S = 1)P(S=0|F)h(Z,F)\right]\\
            & - \mathbb{E}\left[\mathbb{I}(S=0)P(S=1|F)h(Z,F)\right] = 0.
        \end{aligned}
    $$
\end{theorem*}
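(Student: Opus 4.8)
The plan is to bridge \propositionref{prop:CI}, which characterizes $Z \perp S \mid F$ through test functions drawn from the constrained space $\mathcal{E}_{ZF}$, and the present statement, which ranges over the unconstrained space $L^2_{ZF}$ but pairs $h$ against the more elaborate bilinear expression $Q(h)$. The bridge is a centering identity: for an arbitrary $h \in L^2_{ZF}$, its centered version $\tilde{h} \overset{\Delta}{=} h - \mathbb{E}[h \mid F]$ lies in $\mathcal{E}_{ZF}$ (it is square-integrable because conditional expectation is an $L^2$-contraction, and $\mathbb{E}[\tilde{h}\mid F] = 0$ by construction), and I will show that $Q(h) = \mathbb{E}[\mathbb{I}(S=1)\,\tilde{h}]$. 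Once this identity is in hand, the theorem follows immediately from \propositionref{prop:CI}.

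First I would simplify $Q(h)$ using only the binary structure of $S$. Writing $\mathbb{I}(S=0) = 1 - \mathbb{I}(S=1)$ and $P(S=0\mid F) = 1 - P(S=1\mid F)$ and expanding, the cross terms cancel, leaving
\begin{equation}
    Q(h) = \mathbb{E}[\mathbb{I}(S=1)\,h] - \mathbb{E}[P(S=1\mid F)\,h].
\end{equation}
Next I would rewrite the second term. Since $P(S=1\mid F) = \mathbb{E}[\mathbb{I}(S=1)\mid F]$ is $F$-measurable, two applications of the tower property give $\mathbb{E}[P(S=1\mid F)\,h] = \mathbb{E}[P(S=1\mid F)\,\mathbb{E}[h\mid F]] = \mathbb{E}[\mathbb{I}(S=1)\,\mathbb{E}[h\mid F]]$. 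Subtracting then yields $Q(h) = \mathbb{E}[\mathbb{I}(S=1)(h - \mathbb{E}[h\mid F])] = \mathbb{E}[\mathbb{I}(S=1)\,\tilde{h}]$, the claimed identity.

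With the identity established, both directions are short. For the forward direction, assume $Z \perp S \mid F$; for any $h \in L^2_{ZF}$ the centered $\tilde{h}$ lies in $\mathcal{E}_{ZF}$, so \propositionref{prop:CI} gives $\mathbb{E}[\mathbb{I}(S=1)\,\tilde{h}] = 0$, hence $Q(h) = 0$. For the converse, assume $Q(h) = 0$ for every $h \in L^2_{ZF}$; given any $\tilde{h} \in \mathcal{E}_{ZF}$, it already satisfies $\mathbb{E}[\tilde{h}\mid F]=0$, so its own centering is itself and the identity reads $0 = Q(\tilde{h}) = \mathbb{E}[\mathbb{I}(S=1)\,\tilde{h}]$; since this holds for all $\tilde{h} \in \mathcal{E}_{ZF}$, \propositionref{prop:CI} yields $Z \perp S \mid F$.

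I expect the main obstacle to be purely bookkeeping in the second paragraph --- correctly matching the $F$-measurable factors so the tower property collapses $\mathbb{E}[P(S=1\mid F)\,h]$ onto $\mathbb{E}[\mathbb{I}(S=1)\,\mathbb{E}[h\mid F]]$, and confirming that centering keeps the test function inside the relevant $L^2$ space. Everything else reduces to a direct appeal to \propositionref{prop:CI}.
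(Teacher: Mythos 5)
Your proposal is correct and follows essentially the same route as the paper's own proof: center $h$ to $\tilde{h} = h - \mathbb{E}[h\mid F] \in \mathcal{E}_{ZF}$, establish the identity $Q(h) = \mathbb{E}[\mathbb{I}(S=1)\,\tilde{h}]$ via the binary expansion and the tower property, and then invoke \propositionref{prop:CI} in both directions. Your remark that conditional expectation is an $L^2$-contraction (so $\tilde{h}$ indeed lies in $L^2_{ZF}$) is a small detail the paper leaves implicit.
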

\begin{proof}

    On the one hand, if $Z \perp S \mid F$, according to \propositionref{prop:CI}, for any $\tilde{h} \in \mathcal{E}_{ZF}$, $\mathbb{E}[\mathbb{I}(S=1) \cdot \tilde{h}]=0$. Then for any function $h \in L_{ZF}$, define the corresponding $\tilde{h}$ as
    $$
        \tilde{h}(Z, F) = h(Z, F) - \mathbb{E}[h | F].
    $$
    Because
    $$
        \mathbb{E}[\tilde{h} | F] = \mathbb{E}\left[[h(Z, F) - \mathbb{E}[h | F]] | F\right] = \mathbb{E}[h | F] - \mathbb{E}[h | F] = 0,
    $$
    we can know that $\tilde{h} \in \mathcal{E}_{ZF}$. Hence, we have
    $$
        \begin{aligned}
            Q(h) = & \mathbb{E}[\mathbb{I}(S = 1)P(S=0|F) \cdot h] - \mathbb{E}[\mathbb{I}(S=0)P(S=1|F) \cdot h]\\
            = & \mathbb{E}[\mathbb{I}(S = 1) \cdot h] - \mathbb{E}[P(S = 1 | F) \cdot h]\\
            = & \mathbb{E}[\mathbb{I}(S = 1) \cdot h] - \mathbb{E}\left[\mathbb{E}[P(S = 1 | F) \cdot h | F] \right]\\
            = & \mathbb{E}[\mathbb{I}(S = 1) \cdot h] - \mathbb{E}\left[ P(S = 1 | F) \cdot \mathbb{E}[h | F]\right]\\
            = & \mathbb{E}[\mathbb{I}(S = 1) \cdot h] - \mathbb{E}\left[\mathbb{E}[\mathbb{I}(S = 1) | F] \cdot \mathbb{E}[h | F]\right]\\
            = & \mathbb{E}[\mathbb{I}(S = 1) \cdot h] - \mathbb{E}\left[\mathbb{E}\left[\mathbb{I}(S = 1)\cdot \mathbb{E}[h | F]| F\right]\right]\\
            = & \mathbb{E}\left[\mathbb{I}(S = 1) \cdot h - \mathbb{I}(S = 1)\cdot \mathbb{E}[h | F]\right]\\
            = & \mathbb{E}[\mathbb{I}(S = 1) \cdot \tilde{h}] = 0.\\
        \end{aligned}
    $$
    
    On the other hand, if for any $h \in L_{ZF}$, $Q(h) = 0$, then consider any function $\tilde{h} \in \mathcal{E}_{ZF}$. Similarly, we can get
    $$
        \mathbb{E}[\mathbb{I}(S = 1) \cdot \tilde{h}] = Q(\tilde{h}) = 0.
    $$
    According to \propositionref{prop:CI}, $Z$ and $S$ are independent conditional on $F$.
\end{proof}

\subsection{Proof of \texorpdfstring{\theoremref{thrm:CI_loss}}{Theorem}}
\begin{theorem*}
    $L_\text{fair}(Z,F,S)$, $H_{ZF}$, and $Q(h)$ are defined in \theoremref{thrm:chct_with_bnry_vrbls} and \definitionref{def:regularizer}. Then
    $$
        L_\text{fair}(Z,F,S) = \sup_{h \in H_{ZF}} |Q(h)| = \sup_{h \in H_{ZF}} Q(h).
    $$
\end{theorem*}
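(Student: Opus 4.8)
The first equality $L_\text{fair}(Z,F,S) = \sup_{h \in H_{ZF}} |Q(h)|$ is nothing more than \definitionref{def:regularizer}, so the entire content of the statement is the second equality: that the absolute value may be dropped. The plan is to exploit a reflection symmetry of the admissible class $H_{ZF}$ under the map $h \mapsto 1 - h$, together with an anti-symmetry of the functional $Q$ under the same map.

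First I would observe that $H_{ZF}$ is closed under this reflection: if $0 \le h(Z,F) \le 1$ then $0 \le 1 - h(Z,F) \le 1$, and $1-h$ is again square-integrable, so $h \in H_{ZF}$ implies $1 - h \in H_{ZF}$. Next I would show that $Q$ is odd with respect to this reflection, i.e.\ $Q(1 - h) = -Q(h)$. Since $Q$ is linear in its argument (read directly off \equationref{Lh}), this reduces to the single computation $Q(1) = 0$, where $1$ denotes the constant function. To evaluate $Q(1)$ I would condition on $F$ and apply the tower property, obtaining
\begin{equation*}
    \mathbb{E}[\mathbb{I}(S=1)P(S=0|F)] = \mathbb{E}[P(S=1|F)P(S=0|F)] = \mathbb{E}[\mathbb{I}(S=0)P(S=1|F)],
\end{equation*}
so the two terms defining $Q(1)$ cancel and $Q(1) = 0$. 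Linearity then gives $Q(1-h) = Q(1) - Q(h) = -Q(h)$, as desired.

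Finally I would combine the two facts. For any $h \in H_{ZF}$, both $h$ and $1-h$ belong to $H_{ZF}$, and
\begin{equation*}
    |Q(h)| = \max\{Q(h),\, -Q(h)\} = \max\{Q(h),\, Q(1-h)\} \le \sup_{h' \in H_{ZF}} Q(h').
\end{equation*}
Taking the supremum over $h \in H_{ZF}$ yields $\sup_{h} |Q(h)| \le \sup_{h} Q(h)$, while the reverse inequality $\sup_{h} Q(h) \le \sup_{h} |Q(h)|$ is immediate from $Q(h) \le |Q(h)|$. The two inequalities give the claimed equality.

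There is no deep obstacle here; the crux is simply spotting that $h \mapsto 1-h$ is the right involution and that it sends $Q(h)$ to $-Q(h)$, which makes the range $\{Q(h) : h \in H_{ZF}\}$ symmetric about the origin. The only point requiring mild care is verifying $Q(1)=0$ via conditioning, and checking that the constant function and $1-h$ genuinely lie in $H_{ZF}$ (both are bounded, hence in $L^2_{ZF}$), so that the reflection argument stays inside the admissible class throughout.
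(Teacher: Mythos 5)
Your proposal is correct and follows essentially the same route as the paper: both hinge on the identity $Q(h)+Q(1-h)=0$, established by conditioning on $F$ to show the two terms of $Q(\mathbf{1})$ coincide, and then use the involution $h\mapsto 1-h$ on $H_{ZF}$ to drop the absolute value. If anything, your final step ($|Q(h)|=\max\{Q(h),Q(1-h)\}\le\sup_{h'}Q(h')$) is stated more carefully than the paper's, which contains sign slips in writing $\sup Q=\inf Q$ where $\sup Q=-\inf Q$ is meant.
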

\begin{proof}
    Because
    $$
        \begin{aligned}
            & Q(h) + Q(1 - h) \\
            =  & \mathbb{E}[\mathbb{I}(S = 1)P(S=0|F)] - \mathbb{E}[\mathbb{I}(S=0)P(S=1|F)]\\
            = & \mathbb{E}\left[\mathbb{E}[\mathbb{I}(S = 1)P(S=0|F) | F]\right] - \mathbb{E}\left[\mathbb{E}[\mathbb{I}(S = 0)P(S=1|F) | F]\right]\\
            = & \mathbb{E}\left[P(S=0|F)\mathbb{E}[\mathbb{I}(S = 1) | F]\right] - \mathbb{E}\left[P(S=1|F)\mathbb{E}[\mathbb{I}(S = 0) | F]\right]\\
            = & \mathbb{E}\left[P(S=0|F)P(S=1|F)] - \mathbb{E}\right[P(S=0|F)P(S=1|F)] = 0,
        \end{aligned}
    $$
    we have
    $$
        \sup_{h \in H_{ZF}} Q(h) = \inf_{h \in H_{ZF}} Q(h)
    $$
    Hence, 
    $$
        \sup_{h \in H_{ZF}} |Q(h)| = \max\left\{\sup_{h \in H_{ZF}} Q(h), \inf_{h \in H_{ZF}} Q(h)\right\} = \sup_{h \in H_{ZF}} Q(h).
    $$
\end{proof}

\subsection{Proof of \texorpdfstring{\theoremref{thrm:L2}}{Theorem}}
\begin{theorem*}
    $L'_\text{fair}$ provides an upper bound of $L_\text{fair}$, i.e.
    $$
        L'_\text{fair} \ge L_\text{fair}.
    $$
\end{theorem*}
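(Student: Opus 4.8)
The plan is to establish the desired inequality pointwise in $h$ and then pass to the supremum, exploiting two structural facts: both $L_\text{fair}$ and $L'_\text{fair}$ are suprema over the \emph{same} function class $H_{ZF}$, and $Q$ and $Q'$ carry the \emph{same} additive constant $C$ from \equationref{eq:constant}. Concretely, I would start from the weighted-$L_1$ form of $Q(h)$ in \equationref{weighted_expected_L} and the definition of $Q'(h)$, both of which are written as $C$ minus a bracketed nonnegative expectation, so that comparing $Q'(h)$ with $Q(h)$ reduces to comparing the two bracketed terms (with the inequality reversing because the bracket is subtracted).

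First I would fix an arbitrary $h \in H_{ZF}$, so that $0 \le h(Z,F) \le 1$ almost surely, and hence also $0 \le 1 - h(Z,F) \le 1$. The key elementary observation is that $t^2 \le t$ for every $t \in [0,1]$; applying this with $t = h$ and with $t = 1-h$ gives the pointwise inequalities $h^2 \le h$ and $(1-h)^2 \le 1-h$ almost surely. Since the multiplicative weights $\mathbb{I}(S=1)P(S=0\mid F)$ and $\mathbb{I}(S=0)P(S=1\mid F)$ are nonnegative, monotonicity of expectation then yields
\begin{align*}
& \mathbb{E}\!\left[\mathbb{I}(S=1)P(S=0\mid F)(1-h)^2\right] + \mathbb{E}\!\left[\mathbb{I}(S=0)P(S=1\mid F)h^2\right] \\
& \qquad \le \mathbb{E}\!\left[\mathbb{I}(S=1)P(S=0\mid F)(1-h)\right] + \mathbb{E}\!\left[\mathbb{I}(S=0)P(S=1\mid F)h\right].
\end{align*}
Because both $Q(h)$ and $Q'(h)$ equal $C$ minus their respective bracketed terms, the smaller bracket for $Q'$ translates into $Q'(h) \ge Q(h)$ for this $h$.

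Finally, since the inequality $Q'(h) \ge Q(h)$ holds for every $h \in H_{ZF}$, I would conclude by taking suprema over the common domain: for each fixed $h$ we have $\sup_{h' \in H_{ZF}} Q'(h') \ge Q'(h) \ge Q(h)$, and taking the supremum over $h$ on the right-hand side gives $L'_\text{fair} = \sup_{h} Q'(h) \ge \sup_{h} Q(h) = L_\text{fair}$, which is exactly the claim. I do not anticipate a genuine obstacle here; the only points requiring care are ensuring the comparison is done term-by-term so that the shared constant $C$ cancels, and observing that pointwise domination of $Q'$ by $Q$ on the common set $H_{ZF}$ is precisely what transfers the inequality to the suprema.
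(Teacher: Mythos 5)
Your proposal is correct and follows essentially the same route as the paper's proof: both fix $h \in H_{ZF}$, use $t^2 \le t$ on $[0,1]$ to get $h^2 \le h$ and $(1-h)^2 \le 1-h$, compare the bracketed expectations under the shared constant $C$ to obtain $Q'(h) \ge Q(h)$, and then pass to the supremum over the common class $H_{ZF}$. No gaps.
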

\begin{proof}
    Since $h \in H_{ZF}$, we have $0 \leq h \leq 1$. Therefore $h \ge h^2$ and $1 - h \ge (1 - h)^2$. As a result, for any $h \in H_{ZF}$,
    $$
        \begin{aligned}
            & Q'(h)\\
            = & C - \left[\mathbb{E}[\mathbb{I}(S = 1)P(S=0|F) \cdot (1-h)^2] + \mathbb{E}[\mathbb{I}(S=0)P(S=1|F) \cdot h^2]\right]\\
            \geq & C - \left[\mathbb{E}[\mathbb{I}(S = 1)P(S=0|F) \cdot (1-h)] + \mathbb{E}[\mathbb{I}(S=0)P(S=1|F) \cdot h]\right]\\
            = & Q(h).
        \end{aligned}
    $$
    Finally we get
    $$
        L'_\text{fair} = \sup_{h \in H_{ZF}} Q'(h) \geq \sup_{h \in H_{ZF}} Q(h) = L_\text{fair}.
    $$
\end{proof}

\section{Experimental Details}
\begin{table}[ht]
    \centering
    \caption{Hyper-parameters in the experiments}
    \begin{tabular}{llll}
        \toprule
        & Adult & COMPAS & Dutch \\
        \midrule
        \# of hidden units in prediction & 60 & 8 & 35 \\
        \# of hidden units in adversary & 50 & 8 & 20 \\
        \# of adversarial steps & 10 & 5 & 5 \\
        Batch size & 512 & 256 & 512 \\
        Epoch & 400 & 400 & 400 \\
        Learning algorithm & Adadelta & Adadelta & Adadelta \\
        Learning rate & 1.0 & 1.0 & 1.0 \\
        \bottomrule
    \end{tabular}
    \tablelabel{tab:hyper}
\end{table}

We fix the baseline network architectures so that they are shared among different methods. In detail, we set UNFAIR as a single hidden layer MLP with ReLU as activation function and logistic regression as the outcome function. For the adversary part of CFAIR, LAFTR, and ALFR, we also use a single hidden layer MLP. Its input is the hidden layer in UNFAIR and we apply logistic regression to the outcome. As for our method, we add another $|\mathcal{F}|$ units to the input of the adversarial network. The information about the hyper-parameters is shown in \tableref{tab:hyper}.

To get the accuracy-fairness trade-off curve, we sweep across the coefficient $\lambda$ in \equationref{final_loss} from 0.1 to 20. For each coefficient and each model, we train and fine-tune it for 5 times and get the mean of accuracy and fairness metric on the test set. Finally, we calculate the Pareto front of these results as commonly used in literatures \cite[]{madras2018learning, agarwal2018reductions}.

\end{document}